\newcommand{\R}{\mathbb{R}}
\newcommand{\N}{\mathbb{N}}
\def\Lip{\text{\rm Lip}}
\newcommand{\F}{{\mathcal{F}}}
\newcommand{\h}{{\mathcal{H}}}
\newcommand{\s}{{\mathcal{S}}}
\newtheorem{theorem}{Theorem}[section]
\newtheorem*{theorem*}{Theorem}
\newtheorem{lemma}[theorem]{Lemma}
\newtheorem*{lemma*}{Lemma}
\newtheorem*{corollary*}{Corollary}
\newtheorem{proposition}[theorem]{Proposition}
\newtheorem{definition}[theorem]{Definition}
\newtheorem*{definition*}{Definition}
\newtheorem{remark}[theorem]{Remark}
\newtheorem*{remark*}{Remark}
\numberwithin{equation}{section}
\providecommand{\keywords}[1]
{
  \small	
  \textbf{Keywords: } #1
}
\title{
    Universal approximation property of ODENet and ResNet\\ with a single activation function
}
\author{Masato Kimura\thanks{Faculty of Mathematics and Physics, Kanazawa University, Japan (mkimura@se.kanazawa-u.ac.jp)}\and Kazunori Matsui\thanks{Department of Logistics and Information Engineering, Tokyo University of Marine Science and Technology, Japan (kmat002@kaiyodai.ac.jp
)}\and Yosuke Mizuno\thanks{Division of Mathematical and Physical Sciences, Kanazawa University, Japan (marble.89422m@stu.kanazawa-u.ac.jp)}}
\date{}
\begin{document}
    
\maketitle

\begin{abstract}
    We study a universal approximation property of ODENet and ResNet. The ODENet is a map from an initial value to the final value of an ODE system in a finite interval. It is considered a mathematical model of a ResNet-type deep learning system. We consider dynamical systems with vector fields given by a single composition of the activation function and an affine mapping, which is the most common choice of the ODENet or ResNet vector field in actual machine learning systems. We show that such an ODENet and ResNet with a restricted vector field can uniformly approximate ODENet with a general vector field.
\end{abstract}

\keywords{Deep neural network, ODENet, ResNet, Universal approximation property}

\section{Introduction}
Neural networks have significantly impacted computer vision, natural language processing, and learning methods \cite{Schmidhuber15}. Historically, these networks have been seen as models inspired by the human brain and eye \cite{FM82,MP43}. A neural network, with the input layer as layer $0$, the output layer as layer $L$, and the dimensions of each layer as $N\in\N$, is represented by the following equations:
\begin{equation*}
    \left\{\begin{aligned}
        x^{(l+1)} &= f^{(l)}(x^{(l)}) & (l=0,1,\ldots,L-1), \\
        x^{(0)} &= \xi \in D, &
    \end{aligned}\right.
\end{equation*}
where $f^{(l)}:\mathbb{R}^N\to\mathbb{R}^N$ and $D\subset\R^N$ is a bounded closed subset.
The network's input and output are $\xi$ and $x^{(L)}$, respectively.
Neural networks with a large number of layers are called deep networks and perform better than single-hidden-layer neural networks for large-scale and high-dimensional challenges. Notable models like AlexNet \cite{AlexNet} and GoogLeNet \cite{GoogLeNet} contain many layers.

However, if the depth is overly increased, the accuracy might stagnate or degrade \cite{HS15}. Additionally, gradient vanishing or exploding issues may arise in deeper networks, making these networks harder to train \cite{BSF94,GB10}. To address these issues, the authors in \cite{HZRS16} recommended using residual learning to facilitate the training of considerably deeper networks than previously used. Such a network is called a residual network or ResNet, which has the form: 
\begin{equation}\label{eq:intro-resnet}
    \left\{\begin{aligned}
        x^{(l+1)} &= x^{(l)} + f^{(l)}(x^{(l)}) & (l=0,1,\ldots,L-1), \\
        x^{(0)} &= \xi \in D. &
    \end{aligned}\right.
\end{equation}
ResNet can be viewed as an explicit discretization of a system of
ordinary differential equations \cite{CRBD18}.
Choosing a small $h>0$ and denoting $f^{(\ell)}/h$ again by $f^{(\ell)}$
in \eqref{eq:intro-resnet}, we obtain
\begin{equation}\label{eq:intro-euler-method}
    \left\{\begin{aligned}
        x^{(l+1)} &= x^{(l)}+hf^{(l)}(x^{(l)}) & (l=0,1,\ldots,L-1), \\
        x^{(0)} &= \xi \in D, &
    \end{aligned}\right.
\end{equation}
which is the explicit Euler scheme for the following
initial value problem of a system of ODEs.
Putting $x(t):=x^{(l)}$ and $f(x,t):=f^{(l)}(x)$, where $t=hl$, $T=hL$ and $f:[0,T]\times\mathbb{R}^N\rightarrow\mathbb{R}^N$, and taking the limit of \eqref{eq:intro-euler-method} as $h$ approaches zero, we formally obtain the following initial value problem of a system of ODEs:
\begin{align}\label{ode}
    \left\{\begin{array}{ll}
        x'(t)=f(x(t),t)\quad(0\le t\le T),\\
        x(0)=\xi\in D.
    \end{array}\right.
\end{align}
In the context of machine learning,
it is called a neural ordinary differential equation (NODE) \cite{CRBD18}.
We call the function $D\ni\xi\mapsto x(T)\in\mathbb{R}^N$ an ODENet with a vector field $f$. The utilization of NODEs is significant in invertible neural network architectures for constructing more expressive generative models and continuous normalizing flows, facilitating advanced generative modeling and density estimation tasks \cite{FJNO20, GCBSD19, PNRML21} (cf. \cite{HHY21}). For a comprehensive overview, we refer to \cite{Kidger21}. We restrict the function $f$ to the following form:
\begin{equation}\label{eq:intro-function-form}
    f(x,t)=\alpha(t)\odot\mbox{\boldmath $\sigma$}(\beta(t)x+\gamma(t)),
\end{equation}
where $\alpha:[0,T]\rightarrow\mathbb{R}^N$ is a weight vector, $\beta:[0,T]\rightarrow\mathbb{R}^{N\times N}$ is a weight matrix, and $\gamma:[0,T]\rightarrow\mathbb{R}^N$ is a bias vector. The operator $\odot$ denotes the Hadamard product (element-wise product) of two vectors defined by \eqref{eq:Hadamard-product}. The function $\mbox{\boldmath $\sigma$}:\mathbb{R}^N\to\mathbb{R}^N$ is defined by 
\begin{equation}\label{eq:activation-function}
    \mbox{\boldmath $\sigma$}(x):=\left(\sigma(x_1) ,\sigma(x_2),\cdots, \sigma(x_N)\right)^\top,
\end{equation}
where $\sigma:\mathbb{R}\to\mathbb{R}$ is called an activation function. The activation function is, for example, the sigmoid function $\sigma(x)=(1+e^{-x})^{-1}$, the hyperbolic tangent function $\sigma(x)=\tanh(x)$, and the rectified linear unit (ReLU) function $\sigma(x)=\max\{0,x\}$.

It has been shown in \cite{LLS23, RBZ21} that an ODENet has universal approximation property (UAP) in the $L^p$-norm, and in particular, \cite{LLS23} also proves UAP in the sup-norm in the one-dimensional case. The UAP in the sup-norm is shown in \cite{TG21} under the assumption that the activation function is a solution of a quadratic differential equation (and the target function is monotone and analytic), and in \cite{aizw} for the case where the layer width is $2N$.
Additionally, \cite{TTIIO20} shows that an ODENet with a vector field given by a linear combination of activation functions (Remark \ref{Rem:adv}) has UAP (cf. \cite{EZOP23, ZdAFT23}).

From the general discussion on the explicit Euler method, if ODENet has UAP, then ResNet also has UAP \cite{aizw, LLS23, RBZ21, ZdAFT23}. In \cite{LJ18}, it is shown that a ResNet with the ReLU activation function, where the dimensions of the input layer and output one are $N$ and $1$, respectively, alternating layers of $1$ and $N$ dimensions, has UAP in the $L^1$-norm.

In this paper, we establish the conditions for UAP in the sup-norm for the ODENet with respect to any non-polynomial continuous activation function that satisfies a Lipschitz condition. Here, the layer width is $N$.
Specifically, we prove that if $\sigma$ has UAP and satisfies a Lipschitz condition, then the ODENet with respect to any function satisfying the Lipschitz condition can be approximated by the ODENet with respect to a function of type \eqref{eq:intro-function-form}. Additionally, we also demonstrate that ResNet has UAP. 
The organization of this paper is as follows. Section \ref{sec2} defines the notation and function spaces used in this paper, with particular emphasis on the definitions of ODENet, ResNet, and the universal approximation property. Section \ref{sec3} introduces the main theorems. In Section \ref{sec4}, we prepare the necessary theorems and lemmas for proving the main theorems, and in Section \ref{sec5}, we provide the proofs of the main theorems.

\section{ODENet and universal approximation property}\label{sec2}

For $a=(a_1,a_2,\ldots,a_N)^{\top},b=(b_1,b_2,\ldots,b_N)^{\top}\in\mathbb{R}^N$, the Hadamard product of $a$ and $b$ is defined by
\begin{equation}\label{eq:Hadamard-product}
    a\odot b:=\left(
        a_1b_1 ,
        a_2b_2 ,
        \cdots,
        a_Nb_N\right)^\top\in\mathbb{R}^N.
\end{equation}
For a function $f:\R^N\times[0,T]\rightarrow\R^N$, we define its Lipschitz constant by
\begin{align*}
\Lip (f)\coloneqq\sup\left\{\frac{|f(z,t)-f(\zeta,t)|}{|z-\zeta|}\middle|\;z,\zeta\in\R^N,\;z\neq\zeta,\; t\in[0,T]\right\},
\end{align*}
and, if $\Lip (f)<\infty$, we say that $f$ satisfies a Lipschitz condition with respect to $x$. 

We define a solution to the system of ODEs \eqref{ode} as
\begin{align}\label{sol_ode}
    \left\{\begin{aligned}
        x(t) &= \xi+\int^t_0f(x(s),s)ds\quad(0\le t\le T),\\
        x &\in C^0([0,T];\R^N).
    \end{aligned}\right.
\end{align}
Unless otherwise specified, we assume that
\begin{align}\label{asmp_fLip}
    f:\R^N\times[0,T]\rightarrow\R^N\mbox{ satisfies a Lipschitz condition with respect to }x
\end{align}
and 
\begin{align}\label{asmp_fdiscr}
    f\in L^\infty_{\rm loc}(\R^N\times[0,T]).
\end{align}
Although we do not assume continuity of $f$ with respect to time, there exists a unique solution to \eqref{ode} by Theorem \ref{thm:Existence and Uniqueness of Solutions}.

Let $D\subset\R^N$ be a bounded closed subset. We define ODENet and ResNet as follows.
\begin{definition}[ODENet]\label{dfn1}
For the system of ODEs \eqref{ode}, we call $S_f(t)\xi\coloneqq x(t)\quad (0\le t\le T)$ a ``flow'' associated with the vector field $f$ and call $S_f(T): \xi\mapsto x(T)$ ODENet with respect to $f$.
\end{definition}
\begin{definition}[ResNet]\label{dfn2}
    For a sequence of functions $f_R=\{f^{(l)}_R:\R^N\rightarrow\R^N\}_{l=0}^{L-1}$, we define mappings $S^{res}_{f_R}(l):D\ni\xi\mapsto x^{(l)}\in\R^N$, for $l \in 0, \ldots, L$, by the following recurrence formula
    \begin{equation}\label{resnet_th2}
        \left\{\begin{aligned}
        x^{(l+1)} &= x^{(l)} + f^{(l)}_R(x^{(l)}), & (l=0,1,\ldots,L-1),\\
        x^{(0)} &= \xi \in D,
        \end{aligned}\right.
    \end{equation}
    In particular, we call $S^{res}_{f_R}(L)$ ResNet with respect to $f_R$.
\end{definition}

\begin{definition}[Universal approximation property for the activation function $\sigma$]\label{dfn3}
    For $\sigma: \mathbb{R}\rightarrow\mathbb{R}$, consider the set
    \begin{equation*}
        S:=\left\{
            G:D\to\mathbb{R}\left|\;
            G(\xi) = \sum_{l=1}^L\alpha_l\sigma(\beta_l\cdot\xi+\gamma_l),
            L\in\mathbb{N},
            \alpha_l,\gamma_l\in\mathbb{R},\beta_l\in\mathbb{R}^N\right.
        \right\}.
    \end{equation*}
    Suppose that $S$ is dense in $C^0(D)$. In other words, given $F\in C^0(D)$ and $\epsilon>0$, there exists a function $G\in S$ such that
    \begin{equation*}
        |G(\xi)-F(\xi)|<\epsilon
    \end{equation*}
    for all $\xi\in D$. Then, we say that $\sigma$ has a universal approximation property (UAP) on $D$.
\end{definition}

In this paper, we assume that $\sigma\in C^0(\R)$ satisfying the following condition:
\begin{align}\label{asmp_sigma}\begin{aligned}
    &\sigma\mbox{ has UAP on all bounded closed subsets and 
    satisfies the Lipschitz condition with respect to }x.
\end{aligned}\end{align}
A non-polynomial function has UAP on all bounded closed subsets \cite{LLPS93, SM17} (cf. \cite{cyb}).
For example, the truncated power function $x\mapsto x_{+}^k$, defined by
\[
    x_{+}^k:=\left\{\begin{array}{ll}
        x^k & x>0 \\
        0 & x\le0
    \end{array}\right.\quad (k\in\mathbb{N}),
\]
the ReLU function $x\mapsto x^1_+$, the softplus function $x\mapsto \log(1+e^x)$, the sigmoid function $x\mapsto (1+e^{-x})^{-1}$, and the hyperbolic tangent function $x\mapsto \tanh(x)$ have UAP on all bounded closed subsets.

We introduce the notations used in this work. We define the following sets:
\begin{align*}
    \F&\coloneqq\left\{ f\in C^0(\R^N\times[0,T];\R^N) \left|\; \Lip (f)<\infty\right.\right\},\\
    \h&\coloneqq\left\{h:\R^N\times[0,T]\rightarrow\R^N
    \left| \begin{array}{ll}
        h(x,t)=\alpha(t)\odot\boldsymbol{\sigma}(\beta(t)x+\gamma(t))
        \text{ for all }(x,t)\in\R^N\times[0,T],\\
        \text{where }\alpha,\gamma\in C^0([0,T];\R^N),
        \beta\in C^0([0,T];\R^{N\times N})
    \end{array}\right.\right\}\subset\F,\\
    \s(\F)&\coloneqq\left\{S_f(T)\left|f\in\F\right.\right\}\subset C^0(D;\R^N),\\
    \s(\h)&\coloneqq\left\{S_h(T)\left| h\in\h\right.\right\}\subset\s(\F),\\
    \s^{res}&\coloneqq\left\{S^{res}_{f_R}(L)
    \middle|\; \begin{array}{ll}
        f_R=\{f^{(l)}_R:\R^N \ni x\mapsto \alpha_l\odot\boldsymbol{\sigma}(\beta_l x+\gamma_l)\}_{l=0}^{L-1}, L\in\N, \\
        \{\alpha_{l}\}^{L-1}_{l=0},\{\gamma_{l}\}^{L-1}_{l=0}\subset \R^N, \{\beta_{l}\}^{L-1}_{l=0}\subset \R^{N\times N}
    \end{array}
    \right\}.
\end{align*}
Let $\eta\in C^\infty_0(\R^N)$ satisfy that
\begin{align*}
    \eta(x)\ge 0~(x\in\R^N),\quad
    \eta(x)=0~(x\in\R^N,\; |x|\ge 1),\quad
    \int_{\R^N}\eta(x)dx=1.
\end{align*}
For $\alpha,\gamma\in L^\infty(0,T;\R^N)$, $\beta\in L^\infty(0,T;\R^{N\times N})$, and $\delta>0$, we put 
\begin{align*}\begin{array}{c}
    \tilde{\alpha}(t)=\left\{\begin{array}{ll}
        \alpha(t)\quad(0\le t\le T)\\
        0\quad(t<0,T<t)
    \end{array}\right.,\quad
    \tilde{\beta}(t)=\left\{\begin{array}{ll}
        \beta(t)\quad(0\le t\le T)\\
        0\quad(t<0,T<t)
    \end{array}\right.,\quad
    \tilde{\gamma}(t)=\left\{\begin{array}{ll}
        \gamma(t)\quad(0\le t\le T)\\
        0\quad(t<0,T<t)
    \end{array}\right.
\end{array}\end{align*}
and $\alpha_\delta\coloneqq\eta_\delta\ast\tilde{\alpha},\;\beta_\delta\coloneqq\eta_\delta\ast\tilde{\beta},\;\gamma_\delta\coloneqq\eta_\delta\ast\tilde{\gamma}$, 
where $\eta_\delta(x)=\frac{1}{\delta}\eta\left(\frac{x}{\delta}\right)$ and $\ast$ is the convolution product. 

\section{Main theorems}\label{sec3}
\begin{theorem}[UAP for ODENet]\label{main_th}
    It holds that
    \begin{align*}
        \overline{\s(\F)}^{C^0(D;\R^N)}=\overline{\s(\h)}^{C^0(D;\R^N)},
    \end{align*}
    i.e., for given  $F\in\overline{\s(\F)}^{C^0(D;\R^N)}$ and $\epsilon>0$, there exists $h\in\h$ such that
    \begin{align*}
    \|F-S_h(T)\|_{C^0(D;\R^N)}<\epsilon.
    \end{align*}
\end{theorem}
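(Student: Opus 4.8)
The inclusion $\overline{\s(\h)}\subset\overline{\s(\F)}$ is immediate from $\h\subset\F$, so the content lies in the reverse inclusion; since the right-hand side is a closure, it suffices to prove $\s(\F)\subset\overline{\s(\h)}$, that is, to approximate $S_f(T)$ for an arbitrary $f\in\F$ by $S_h(T)$ with $h\in\h$. The basic tool will be a \emph{flow-stability lemma} of Gronwall type (to be established in Section~\ref{sec4}): if two vector fields agree up to a small error on the compact set swept out by the trajectories and have uniformly bounded Lipschitz constants, then the corresponding ODENets are uniformly close on $D$. The obstruction to using this directly is that a single $h\in\h$ has the form $h_i(x,t)=\alpha_i(t)\,\sigma(\beta_i(t)\cdot x+\gamma_i(t))$, i.e. only \emph{one} neuron per coordinate, whereas the universal approximation property of $\sigma$ produces \emph{sums} of neurons. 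The resolution is to realise such sums not inside a single vector field but across time, exploiting that a concatenation of $\h$-flows is again an $\h$-flow after rescaling time, since amplitude scaling preserves the form $\alpha\odot\boldsymbol{\sigma}(\beta x+\gamma)$.

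The plan is a three-level approximation. First, since $f\in C^0$ is uniformly continuous on compacts, I would freeze the time variable on a partition $0=t_0<\cdots<t_M=T$ of mesh $\Delta=T/M$ and approximate $S_f(T)$ by the composition $S_{g_{M-1}}(\Delta)\circ\cdots\circ S_{g_0}(\Delta)$ of autonomous flows with $g_k:=f(\cdot,t_k)$; this is valid for large $M$ by flow stability. Second, on each subinterval I would invoke the UAP of $\sigma$ to approximate $g_k$ coordinatewise, grouping the $l$-th neuron of every coordinate into one autonomous field $g_k^{(l)}(x)=a^{(l)}\odot\boldsymbol{\sigma}(B^{(l)}x+c^{(l)})\in\h$, so that $g_k\approx\tilde g_k=\sum_{l=1}^{L}g_k^{(l)}$ and hence $S_{g_k}(\Delta)\approx S_{\tilde g_k}(\Delta)$. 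Third, I would realise the flow of the \emph{sum} $\tilde g_k$ by Lie--Trotter operator splitting, $S_{\tilde g_k}(\Delta)\approx\bigl(S_{g_k^{(L)}}(\Delta/m)\circ\cdots\circ S_{g_k^{(1)}}(\Delta/m)\bigr)^m$, each factor being an autonomous single-neuron $\h$-flow, with convergence as $m\to\infty$.

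It then remains to assemble the long chain of autonomous $\h$-flows produced above into one genuine element of $\s(\h)$. Concatenating them and rescaling time to fit the fixed interval $[0,T]$ yields a time-dependent field that is of $\h$-form at each instant but only piecewise constant in $t$, so its coefficients $\alpha,\beta,\gamma$ lie in $L^\infty$ rather than $C^0$. To land in $\h$ I would mollify in time using the kernels $\eta_\delta$ introduced in Section~\ref{sec2}, replacing $\alpha,\beta,\gamma$ by $\alpha_\delta,\beta_\delta,\gamma_\delta\in C^0$; as $\delta\to0$ these converge in $L^1(0,T)$ while staying bounded in $L^\infty$, so an $L^1$-in-time version of the flow-stability lemma keeps the resulting ODENet close. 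Choosing $M$ large, then the UAP tolerance small (fixing the $g_k^{(l)}$ and their Lipschitz constants), then $m$ large, then $\delta$ small, and summing the four error contributions gives $\|S_f(T)-S_h(T)\|_{C^0(D;\R^N)}<\epsilon$ for the final $h\in\h$.

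The main obstacle I anticipate is the uniform control of Lipschitz constants throughout this tower of limits, since the Gronwall factor $e^{\Lip(\cdot)T}$ governs every stability estimate: I must ensure that the fields $g_k^{(l)}$ coming from UAP, their splitting concatenations, and their time-mollifications all have Lipschitz constants bounded independently of $M$, $m$, and $\delta$, and that all trajectories remain in one fixed compact set so that ``closeness on compacts'' is meaningful. Making the operator-splitting estimate and the $L^1$-in-time mollification estimate quantitative and compatible with this uniform Lipschitz bound --- rather than establishing the convergences themselves --- is where the real work lies.
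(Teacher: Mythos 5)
Your proposal is correct and follows essentially the same route as the paper: time-freezing $f$ on a partition (subsection \ref{pr_1}), coordinatewise UAP to produce sums of single neurons (Lemma \ref{lem:UAP_Ndim}), realizing the sum-flow by rapidly switching among amplitude-rescaled single-neuron flows --- your Lie--Trotter composition $\bigl(S_{g_k^{(L)}}(\Delta/m)\circ\cdots\circ S_{g_k^{(1)}}(\Delta/m)\bigr)^m$ is exactly the flow of the paper's fast-oscillating periodic field $g_{l,m}$, whose convergence is Theorem \ref{thm:piccinini} (Piccinini's averaging theorem) --- and finally time-mollification of the piecewise-constant coefficients with an $L^1$-in-time Gronwall estimate (subsection \ref{pr_3}). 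The uniform-Lipschitz obstacle you flag is handled in the paper by Lemma \ref{lem:gronwall_ineq}, whose Gronwall factor involves only $\Lip(f)$ of the original field (never the approximants'), together with the backward-geometric error budget $b_l$ that propagates errors across the subintervals.
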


\begin{theorem}\label{main_th_Lp}
    It holds that
    \begin{align*}
    \overline{\s(\F)}^{L^p(D;\R^N)}=\overline{\s(\h)}^{L^p(D;\R^N)},
    \end{align*}
    i.e., for given  $F\in\overline{\s(\F)}^{L^p(D;\R^N)}$ and $\epsilon>0$, there exists $h\in\h$ such that
    \begin{align*}
    \|F-S_h(T)\|_{L^p(D;\R^N)}<\epsilon.
    \end{align*}
\end{theorem}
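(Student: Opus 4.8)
The plan is to deduce Theorem \ref{main_th_Lp} directly from the sup-norm result of Theorem \ref{main_th}, exploiting the fact that $D$ is bounded and hence has finite Lebesgue measure. The inclusion $\overline{\s(\h)}^{L^p(D;\R^N)}\subset\overline{\s(\F)}^{L^p(D;\R^N)}$ is immediate, since $\h\subset\F$ gives $\s(\h)\subset\s(\F)$ and closures are monotone; the content lies entirely in the reverse inclusion.

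The key elementary observation I would isolate first is that for every $g\in C^0(D;\R^N)$ one has
\begin{equation*}
    \|g\|_{L^p(D;\R^N)}\le |D|^{1/p}\,\|g\|_{C^0(D;\R^N)},
\end{equation*}
where $|D|<\infty$ denotes the Lebesgue measure of the bounded set $D$. In particular, uniform convergence on $D$ implies $L^p$-convergence, so any $C^0(D;\R^N)$-approximation automatically yields an $L^p(D;\R^N)$-approximation with error controlled by the same quantity up to the fixed factor $|D|^{1/p}$.

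With this in hand I would argue by a two-step (triangle) estimate. Let $F\in\overline{\s(\F)}^{L^p(D;\R^N)}$ be given, and choose a sequence $g_n\in\s(\F)$ with $\|F-g_n\|_{L^p(D;\R^N)}\to 0$. Each $g_n$ lies in $\s(\F)\subset\overline{\s(\F)}^{C^0(D;\R^N)}=\overline{\s(\h)}^{C^0(D;\R^N)}$ by Theorem \ref{main_th}, so there exists $h_n\in\h$ with $\|g_n-S_{h_n}(T)\|_{C^0(D;\R^N)}<1/n$, and hence $\|g_n-S_{h_n}(T)\|_{L^p(D;\R^N)}\le |D|^{1/p}/n$ by the inequality above. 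The bound
\begin{equation*}
    \|F-S_{h_n}(T)\|_{L^p(D;\R^N)}\le\|F-g_n\|_{L^p(D;\R^N)}+|D|^{1/p}/n
\end{equation*}
then shows $S_{h_n}(T)\to F$ in $L^p(D;\R^N)$, so $F\in\overline{\s(\h)}^{L^p(D;\R^N)}$. This establishes $\overline{\s(\F)}^{L^p(D;\R^N)}\subset\overline{\s(\h)}^{L^p(D;\R^N)}$ and, with the trivial inclusion, the claimed equality; extracting an $\epsilon$ from a single large $n$ recovers the $\epsilon$-formulation in the statement.

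I do not expect a serious obstacle here: once Theorem \ref{main_th} is available, the $L^p$ statement is a soft consequence of the finiteness of $|D|$ and the domination of the $L^p$-norm by the sup-norm. The only point requiring a little care is that $F$ need not be continuous, being merely an $L^p$-limit of the continuous maps $g_n$; but the combination argument never uses continuity of $F$, only that it is $L^p$-approximable by elements of $\s(\F)$, each of which is in turn uniformly approximable by elements of $\s(\h)$. Implicitly one takes $1\le p<\infty$ so that the norm inequality holds, while $p=\infty$ simply coincides with Theorem \ref{main_th} for continuous targets.
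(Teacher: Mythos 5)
Your proposal is correct and matches the paper's own route: the paper disposes of Theorem \ref{main_th_Lp} with the single remark that it suffices to show every $F\in\s(\F)$ lies in $\overline{\s(\h)}^{C^0(D;\R^N)}$, which is precisely your reduction of the $L^p$ statement to the sup-norm approximation via the bound $\|g\|_{L^p(D;\R^N)}\le |D|^{1/p}\|g\|_{C^0(D;\R^N)}$ on the bounded set $D$. Your version merely spells out the triangle-inequality and sequence details that the paper leaves implicit.
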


\begin{remark}
    In the case where $N\ge2$ and $D$ is an open cube, since all measure-preserving maps are included in $\overline{\s(\F)}^{L^p(D;\R^N)}$ \cite[Corollary 1.1]{BG03}, it follows that all measure-preserving maps are contained in $\overline{\s(\h)}^{L^p(D;\R^N)}$.
\end{remark}

\begin{remark}\label{Rem:adv}
    Theorem \ref{main_th} claims that $S_f$ can be approximated by $S_{\alpha\odot\sigma(\beta\cdot x+\gamma)}$, not by\\ $S_{\sum_{l=1}^L\alpha_l\odot\sigma(\beta_l\cdot x+\gamma_l)}$. Since $\sigma$ has UAP, $f$ can be approximated by $\sum_{l=1}^L\alpha_l\odot\sigma(\beta_l\cdot x+\gamma_l)$ (cf. Lemma \ref{lem:UAP_Ndim}), one expects that $S_f$ can be approximated by $S_{\sum_{l=1}^L\alpha_l\odot\sigma(\beta_l\cdot x+\gamma_l)}$ (cf. \cite{TTIIO20}). On the other hand, $f$ cannot be approximated by $\alpha\odot\sigma(\beta\cdot x+\gamma)$ in general. Thus, Theorem \ref{main_th} is not obvious.
\end{remark}

Here we show that, in general, any $F\in C^0(D;\R^N)$ cannot be approximated by $S_f(T)$.
\begin{proposition}[\cite{hzh}]\label{prop2}
Let $N=1, D=[-1,1]$, and let $F:D\rightarrow\R$ satisfy
\begin{align*}
F(\xi)=-\xi\quad(\xi\in D).
\end{align*}
Then, we have $\|F - S_f(T)\|_{C^0(D;\R)}\ge 1$ for all $f\in\F$.
\end{proposition}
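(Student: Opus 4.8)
The plan is to exploit the fact that, in one space dimension, the time-$T$ flow $S_f(T)$ is necessarily a non-decreasing function of the initial datum, whereas the prescribed target $F(\xi)=-\xi$ is strictly decreasing. These two opposite monotonicity behaviours are incompatible at the endpoints of $D=[-1,1]$, and this incompatibility forces an error of at least $1$. Throughout I write $g\coloneqq S_f(T)$ for brevity.

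The first step is to establish the monotonicity of $g$. Fix $\xi_1<\xi_2$ and let $x_1(t)\coloneqq S_f(t)\xi_1$ and $x_2(t)\coloneqq S_f(t)\xi_2$ be the two corresponding solutions, which exist and are unique by Theorem \ref{thm:Existence and Uniqueness of Solutions}. I claim that $x_1(t)\le x_2(t)$ for every $t\in[0,T]$. Suppose this failed; since $x_1(0)=\xi_1<\xi_2=x_2(0)$ and both trajectories are continuous, there would be a first time $t_0\in(0,T]$ with $x_1(t_0)=x_2(t_0)=:c$. But then $x_1$ and $x_2$ would both solve the initial value problem $y'(t)=f(y(t),t)$ with the same datum $y(t_0)=c$, and since $f$ satisfies the Lipschitz condition \eqref{asmp_fLip}, uniqueness holds both forward and backward in time; hence $x_1\equiv x_2$ on $[0,T]$, contradicting $x_1(0)\ne x_2(0)$. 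Therefore trajectories cannot cross, $x_1(t)\le x_2(t)$ for all $t$, and in particular $g$ is non-decreasing, so $g(-1)\le g(1)$.

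The second step is a short argument by contradiction using the endpoint values $F(-1)=1$ and $F(1)=-1$. Suppose, contrary to the claim, that $\|F-g\|_{C^0(D;\R)}<1$. Evaluating at $\xi=-1$ gives $|1-g(-1)|<1$, hence $g(-1)>0$; evaluating at $\xi=1$ gives $|-1-g(1)|<1$, hence $g(1)<0$. Combining these yields $g(-1)>0>g(1)$, i.e. $g(-1)>g(1)$, which contradicts the monotonicity $g(-1)\le g(1)$ established above. Consequently $\|F-g\|_{C^0(D;\R)}\ge1$ for every $f\in\F$, which is the assertion.

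The only nontrivial ingredient is the non-crossing property of one-dimensional trajectories, so I expect the main obstacle to be the clean invocation of uniqueness at the putative crossing time $t_0$, including backward-in-time uniqueness, which is exactly what the Lipschitz assumption \eqref{asmp_fLip} secures via Theorem \ref{thm:Existence and Uniqueness of Solutions}. Everything after the monotonicity step is elementary and uses only the two endpoint evaluations, so no attainment of the supremum or further regularity of $g$ beyond continuity is needed.
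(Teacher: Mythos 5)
Your proof is correct and is essentially the same argument as the paper's: the endpoint evaluations forcing $S_f(T)(1)<0<S_f(T)(-1)$ are identical, and your monotonicity (non-crossing) lemma is just a repackaging of the paper's step where the intermediate value theorem produces a crossing time $t^*$ that contradicts uniqueness of solutions (forward and backward) under the Lipschitz condition. The only difference is organizational: you isolate non-crossing as a general monotonicity statement before deriving the contradiction, while the paper applies the crossing argument directly to the two endpoint trajectories.
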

\begin{proof}
Suppose, for the sake of contradiction, that there exists $f\in\F$ such that $|S_f(T)-F|_{C^0(D)}<1$. Let $x_\pm(t)=(S_f(t))(\pm1)$ (in the same order of composition). Then we have $x_\pm(0)=\pm1$ and
\begin{align*}
    x_+(T) &< |x_+(T) - F(1)| + F(1) < 1-1 = 0, \\
    x_-(T) &> -|x_-(T) - F(-1)| + F(-1) > -1+1 = 0.
\end{align*}
By the intermediate value theorem, there exists $t^*\in(0,T)$ such that $x_-(t^*) = x_+(t^*)$. This contradicts the uniqueness of the solution to \eqref{ode} (see Theorem \ref{thm:Existence and Uniqueness of Solutions}).
\end{proof}

By Theorem \ref{main_th}, we also obtain the following theorem.

\begin{theorem}\label{main_th_2}
For all $f\in\F$ and $\epsilon>0$, there exists $s^{res}\in\s^{res}$ such that 
\begin{align*}
\|S_f(T)-s^{res}\|_{C^0(D;\R^N)}<\epsilon.
\end{align*}
\end{theorem}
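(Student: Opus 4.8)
The plan is to reduce Theorem \ref{main_th_2} to Theorem \ref{main_th} together with the classical convergence of the explicit Euler scheme. Since $S_f(T)\in\s(\F)\subset\overline{\s(\F)}^{C^0(D;\R^N)}=\overline{\s(\h)}^{C^0(D;\R^N)}$ by Theorem \ref{main_th}, there exists $h\in\h$ with
\[
\|S_f(T)-S_h(T)\|_{C^0(D;\R^N)}<\frac{\epsilon}{2}.
\]
Writing $h(x,t)=\alpha(t)\odot\boldsymbol{\sigma}(\beta(t)x+\gamma(t))$ with $\alpha,\gamma\in C^0([0,T];\R^N)$ and $\beta\in C^0([0,T];\R^{N\times N})$, it then suffices to approximate the single flow $S_h(T)$ by an element of $\s^{res}$.

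Next I would discretize the flow $S_h$ by the explicit Euler method. Fix $L\in\N$, set $\tau=T/L$ and $t_l=l\tau$, and define a ResNet vector field $f_R=\{f^{(l)}_R\}_{l=0}^{L-1}$ by
\[
f^{(l)}_R(x)=\alpha_l\odot\boldsymbol{\sigma}(\beta_l x+\gamma_l),\qquad
\alpha_l=\tau\,\alpha(t_l),\quad\beta_l=\beta(t_l),\quad\gamma_l=\gamma(t_l).
\]
By construction $f^{(l)}_R(x)=\tau\,h(x,t_l)$, so $s^{res}:=S^{res}_{f_R}(L)\in\s^{res}$, and the recurrence \eqref{resnet_th2} reads $x^{(l+1)}=x^{(l)}+\tau\,h(x^{(l)},t_l)$, i.e. $s^{res}$ is exactly the Euler polygon for $x'=h(x,t)$ with step $\tau$.

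Finally I would prove $\|S_h(T)-s^{res}\|_{C^0(D;\R^N)}\to0$ as $L\to\infty$, uniformly in $\xi\in D$, and then choose $L$ large enough that this quantity is below $\epsilon/2$; the triangle inequality then finishes the proof. For the convergence I would first note that, since $D$ is compact and $h$ is Lipschitz in $x$ with $|h(0,\cdot)|$ bounded on $[0,T]$, a Gronwall estimate confines both the exact trajectories $\{S_h(t)\xi:\xi\in D,\ t\in[0,T]\}$ and, for $L$ large, the Euler iterates to a common compact set $K\subset\R^N$. On $K\times[0,T]$ the field $h$ is bounded, Lipschitz in $x$ with constant $\Lip(h)$, and uniformly continuous in $t$. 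The standard one-step consistency estimate together with a discrete Gronwall inequality then yields a global error bound of the form $C\,(\omega_h(\tau)+\tau)$, where $\omega_h$ is the modulus of continuity of $t\mapsto h(\cdot,t)$ on $K$ and $C=C(\Lip(h),T)$; since neither $C$ nor $K$ depends on $\xi$, the convergence is uniform over $D$.

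The main obstacle is precisely this last uniform estimate: one must guarantee that the bounding set $K$ and the Lipschitz/continuity data controlling the Euler error are independent of the initial value $\xi\in D$. This is what upgrades the pointwise-in-$\xi$ convergence of Euler's method to convergence in $C^0(D;\R^N)$, and it is exactly where the compactness of $D$ and the global Lipschitz bound on $h\in\h$ enter.
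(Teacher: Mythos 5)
Your proposal is correct and follows essentially the same route as the paper: Theorem \ref{main_th} reduces the problem to a single $h\in\h$, the ResNet is taken to be the explicit Euler scheme $x^{(l+1)}=x^{(l)}+\tau\,h(x^{(l)},t_l)$, and the error is controlled by a uniform-in-$\xi$ one-step consistency estimate combined with a discrete Gronwall argument. The paper packages your ``uniform consistency'' step as Lemma \ref{lm_res} (uniform continuity of $s\mapsto h(S_h(s)\xi,s)$ on the compact set $[0,T]\times D$) and, since $\sigma$ is globally Lipschitz with $\alpha,\beta$ bounded, it never needs to confine the Euler iterates to a compact set $K$ --- only the exact trajectory is bounded via Lemma \ref{lem:bounded_mollifier} --- but this is a minor simplification, not a difference of method.
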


\section{Preliminary results}\label{sec4}
We prepare some theorems and lemmas to prove the main theorems.

\begin{theorem}[Existence and uniqueness of solutions]\label{thm:Existence and Uniqueness of Solutions}
    Let $f:\R^N\times[0,T]\rightarrow\R^N$ be measurable and satisfy \eqref{asmp_fLip}. If there exists $\xi_0\in\R^N$ such that $f(\xi_0,\cdot)\in L^\infty(0,T)$, then there exists a unique solution $x(t)$ to \eqref{sol_ode}. In particular, if $f$ satisfies \eqref{asmp_fLip} and \eqref{asmp_fdiscr}, then there exist a unique solution $x(t)$ to \eqref{sol_ode}.
\end{theorem}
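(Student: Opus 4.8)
The plan is to recast the integral equation \eqref{sol_ode} as a fixed-point problem and apply the Banach fixed-point theorem with a weighted norm. I would work on the Banach space $C^0([0,T];\R^N)$ and define the operator
\[
    (\Phi x)(t)\coloneqq\xi+\int_0^t f(x(s),s)\,ds,
\]
so that the solutions of \eqref{sol_ode} are exactly the fixed points of $\Phi$. The first step is a linear growth bound: using \eqref{asmp_fLip} together with the hypothesis $f(\xi_0,\cdot)\in L^\infty(0,T)$, one gets for a.e.\ $t$ and every $z\in\R^N$
\[
    |f(z,t)|\le|f(z,t)-f(\xi_0,t)|+|f(\xi_0,t)|\le\Lip(f)\,|z-\xi_0|+\|f(\xi_0,\cdot)\|_{L^\infty(0,T)}.
\]
Since any $x\in C^0([0,T];\R^N)$ is bounded on $[0,T]$, this shows that $s\mapsto f(x(s),s)$ is essentially bounded, hence integrable, and therefore $\Phi x$ is (absolutely) continuous; thus $\Phi$ maps $C^0([0,T];\R^N)$ into itself.

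Before that conclusion is legitimate, however, I must verify that $s\mapsto f(x(s),s)$ is measurable whenever $x$ is continuous. This is the one genuinely delicate point, since $f$ is measurable but not assumed continuous in $t$. The standard remedy is a Carath\'eodory-type argument: approximate $x$ uniformly by piecewise-constant functions $x_k$, observe that $s\mapsto f(x_k(s),s)$ is measurable (being, on each subinterval, the measurable function $f(c,\cdot)$ for a constant $c$), and then pass to the limit $x_k\to x$ using that $f(\cdot,t)$ is continuous for a.e.\ $t$, which is guaranteed by the Lipschitz condition \eqref{asmp_fLip}. I expect this measurability check to be the main obstacle; the remaining steps are a routine contraction estimate.

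To produce the contraction globally on $[0,T]$ (rather than only on a short interval), I would equip $C^0([0,T];\R^N)$ with the Bielecki weighted norm $\|x\|_\lambda\coloneqq\sup_{0\le t\le T}e^{-\lambda t}|x(t)|$ for a parameter $\lambda>\Lip(f)$; this norm is equivalent to the usual sup-norm, so the space remains Banach. Then for $x,y\in C^0([0,T];\R^N)$,
\[
    |(\Phi x)(t)-(\Phi y)(t)|\le\Lip(f)\int_0^t|x(s)-y(s)|\,ds\le\Lip(f)\,\|x-y\|_\lambda\int_0^t e^{\lambda s}\,ds\le\frac{\Lip(f)}{\lambda}\,e^{\lambda t}\,\|x-y\|_\lambda,
\]
and multiplying by $e^{-\lambda t}$ and taking the supremum over $t$ yields $\|\Phi x-\Phi y\|_\lambda\le(\Lip(f)/\lambda)\,\|x-y\|_\lambda$ with contraction constant $\Lip(f)/\lambda<1$. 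The Banach fixed-point theorem then furnishes a unique fixed point $x\in C^0([0,T];\R^N)$, which is the desired unique solution of \eqref{sol_ode}, giving both existence and uniqueness at once.

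Finally, for the ``in particular'' statement I would deduce the hypothesis of the first part from \eqref{asmp_fdiscr}. Indeed, $f\in L^\infty_{\rm loc}(\R^N\times[0,T])$ means $f$ is essentially bounded on the compact neighborhood $\overline{B(\xi_0,1)}\times[0,T]$ by some $M$; by Fubini this gives, for a.e.\ $t$, that $|f(z,t)|\le M$ for a.e.\ $z$ near $\xi_0$, and the Lipschitz continuity of $f(\cdot,t)$ then pins down $|f(\xi_0,t)|\le M$ for a.e.\ $t$. Hence $f(\xi_0,\cdot)\in L^\infty(0,T)$ for any chosen $\xi_0$, and the first part applies directly.
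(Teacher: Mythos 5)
Your proof is correct, but it follows a genuinely different route from the paper. The paper runs Picard's successive approximation explicitly: it defines $x_0\equiv\xi_0$, $x_n(t)=\xi_0+\int_0^t f(x_{n-1}(s),s)\,ds$, proves by induction the factorial-decay estimate $|x_n(t)-x_{n-1}(t)|\le \frac{F_0\Lip(f)^{n-1}}{n!}t^n$, deduces that $\{x_n\}$ is Cauchy in $C^0([0,T];\R^N)$, passes to the limit in the integral equation, and then proves uniqueness in a separate step via Gronwall's inequality. You instead apply the Banach fixed-point theorem to the same operator $\Phi$, with the Bielecki norm $\|x\|_\lambda=\sup_t e^{-\lambda t}|x(t)|$, $\lambda>\Lip(f)$, so that existence and uniqueness come out simultaneously from a single contraction estimate; this trades the paper's elementary induction (which yields explicit convergence rates for the iterates) for a shorter, more abstract argument that needs no interval splitting and no separate Gronwall step. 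Two further points are in your favor: you explicitly verify that $s\mapsto f(x(s),s)$ is measurable for continuous $x$ (via piecewise-constant approximation and the continuity of $f(\cdot,t)$ coming from \eqref{asmp_fLip}), a point the paper's iteration tacitly assumes at every stage; and you actually prove the ``in particular'' implication, deducing $f(\xi_0,\cdot)\in L^\infty(0,T)$ from \eqref{asmp_fdiscr} by essential boundedness on $\overline{B(\xi_0,1)}\times[0,T]$, Fubini, and the continuity of $f(\cdot,t)$, whereas the paper asserts this step without argument. One cosmetic remark: the Lipschitz condition as defined in the paper holds for \emph{every} $t\in[0,T]$, so your appeal to continuity of $f(\cdot,t)$ for a.e.\ $t$ is available in the stronger, everywhere form.
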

\begin{proof}
We prove Theorem \ref{thm:Existence and Uniqueness of Solutions} using Picard's successive approximation method.
Let $F_0=\|f(\xi_0,\cdot)\|_{L^\infty(0,T)}$. First, we show the existence of a solution $x(t)$. Consider the following sequence of functions:
\begin{align}\label{3_1}
    \left\{\begin{array}{ll}
        x_0(t)\coloneqq \xi_0\quad 
        (0\le t\le T)\\
        {\displaystyle x_n(t)\coloneqq \xi_0+\int^t_{0}f(x_{n-1}(s),s)ds\quad(0\le t\le T)\quad (n=1,2,\ldots).}
    \end{array}\right.
\end{align}
The sequence $\{x_n\}^\infty_{n=0}\subset C^0([0,T];\R^N)$ satisfies, for all $n=1,2,\ldots$ and $0\le t\le T$,
\begin{align}\label{2_7}
    |x_n(t)-x_{n-1}(t)|\le \frac{F_0\Lip (f)^{n-1}}{n!}t^n,
\end{align}
which can be proved by induction. Hence, it holds that for all $m\geq l\geq 1$, 
\begin{align*}
    |x_m(t)-x_l(t)|
    &\le \sum^m_{j=l+1}|x_j(t)-x_{j-1}(t)|
    \le \frac{F_0}{\Lip (f)}\sum^m_{j=l+1}\frac{\Lip (f)^jt^j}{j!}
    \le \frac{F_0}{\Lip (f)}\sum^\infty_{j=l+1}\frac{(\Lip (f)T)^j}{j!}\rightarrow 0
\end{align*}
as $l\rightarrow\infty$.
Therefore, $\{x_n\}^\infty_{n=1}$ is a Cauchy sequence in $C^0([0,T])$ and there exists $x\in C^0([0,T])$ such that
\begin{align*}
x_n\rightrightarrows x\quad \text{on}\quad [0,T] \quad\text{as}\quad n\rightarrow\infty.
\end{align*}
Since we have that
\begin{align*}
    \left |\int^t_{0}f(x_n(s),s)ds-\int^t_{0}f(x(s),s)ds\right |
    &\le \int^t_{0}|f(x_n(s),s)-f(x(s),s)|ds
    \le \Lip (f)\int^t_{0}|x_n(s)-x(s)|ds\\
    &\le \Lip (f)T\|x_n-x\|_{C^0([0,T])}\rightarrow 0
\end{align*}
as $n\rightarrow\infty$.
we obtain that $x(t)=\xi+\int^t_{0}f(x(s),s)ds\quad(0\le t\le T)$.\\
Second, we show the uniqueness of $x$ using contradiction. Suppose that there exist two solutions $x,\bar{x}$ satisfying $x\neq\bar{x}$. Then, it holds that
\begin{align*}
|x(t)-\bar{x}(t)|&\le\int^t_{0}|f(x(s),s)-f(\bar{x}(s),s)|ds\le \Lip (f)\int^t_{0}|x(s)-\bar{x}(s)|ds.
\end{align*}
By Gronwall's inequality, we have that
\begin{align*}
|x(t)-\bar{x}(t)|\le 0\cdot e^{\Lip (f)t}=0.
\end{align*}
It means $x=\bar{x}$. But this contradicts the assumption. Therefore, the solution to \eqref{sol_ode} is unique. 
\end{proof} 

\begin{lemma}\label{lem:range of solutions}
    Under the assumption of Theorem \ref{thm:Existence and Uniqueness of Solutions}, we have that 
    \begin{align*}
        |S_f(t)\xi_0- \xi_0|&\le M\quad (0\le t\le T),
    \end{align*}
    where $M=F_0Te^{\Lip (f)T}$ and $F_0=\|f(\xi_0,\cdot)\|_{L^\infty(0,T)}$.
\end{lemma}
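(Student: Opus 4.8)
The plan is to estimate $|S_f(t)\xi_0 - \xi_0|$ directly from the integral equation \eqref{sol_ode} and then close the estimate with Gronwall's inequality, exactly mirroring the uniqueness argument in the proof of Theorem \ref{thm:Existence and Uniqueness of Solutions}. Writing $x(t) = S_f(t)\xi_0$, the defining relation gives $x(t) - \xi_0 = \int_0^t f(x(s),s)\,ds$, so the first step is simply to take norms and apply the triangle inequality for integrals to obtain $|x(t)-\xi_0| \le \int_0^t |f(x(s),s)|\,ds$.

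The key idea in the second step is to split the integrand $f(x(s),s)$ by adding and subtracting $f(\xi_0,s)$, which lets me separate the part controlled by the Lipschitz constant from the part controlled by $F_0$. Using \eqref{asmp_fLip} on the difference and the definition of $F_0 = \|f(\xi_0,\cdot)\|_{L^\infty(0,T)}$ on the remaining term, I get the pointwise bound
\begin{align*}
    |f(x(s),s)| \le |f(x(s),s) - f(\xi_0,s)| + |f(\xi_0,s)| \le \Lip(f)\,|x(s)-\xi_0| + F_0.
\end{align*}
Substituting this into the integral inequality yields
\begin{align*}
    |x(t)-\xi_0| \le F_0\,t + \Lip(f)\int_0^t |x(s)-\xi_0|\,ds \le F_0\,T + \Lip(f)\int_0^t |x(s)-\xi_0|\,ds.
\end{align*}

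The final step is to apply the integral form of Gronwall's inequality to the nonnegative function $u(t) := |x(t)-\xi_0|$, which satisfies $u(t) \le F_0 T + \Lip(f)\int_0^t u(s)\,ds$ with constant forcing term $F_0 T$. Gronwall then gives $u(t) \le F_0 T\, e^{\Lip(f)\,t} \le F_0 T\, e^{\Lip(f)\,T} = M$ for all $t \in [0,T]$, which is the claimed bound. I do not expect any genuine obstacle here: the existence of the flow and the finiteness of $\Lip(f)$ and $F_0$ are guaranteed by the hypotheses inherited from Theorem \ref{thm:Existence and Uniqueness of Solutions}, so the only care needed is the add-and-subtract trick in the second step, which is what converts the raw integral bound into a form amenable to Gronwall.
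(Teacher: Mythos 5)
Your proof is correct and follows essentially the same route as the paper: both decompose $|f(x(s),s)|$ by adding and subtracting $f(\xi_0,s)$, bound the difference by $\Lip(f)\,|x(s)-\xi_0|$ and the remainder by $F_0$, and close the estimate with Gronwall's inequality to obtain $|x(t)-\xi_0|\le F_0 T e^{\Lip(f)T}$. No gaps; nothing further needed.
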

\begin{proof}
Let $x(t) = S_f(t)\xi~(0 \le t \le T)$. It holds that 
\begin{align*}
    |x(t)-\xi_0|&\le \int^t_0|f(x(s),s)-f(\xi_0,s)|ds+\int^t_0|f(\xi_0,s)|ds
    \le\Lip (f)\int^t_0|x(s)-\xi_0|ds+\int^t_0F_0ds\\
    &\le \Lip (f)\int^t_0|x(s)-\xi_0|ds+F_0T.
\end{align*}
By Gronwall's inequality, we have 
\begin{align*}
    |x(t)-\xi_0|\le F_0Te^{\Lip (f)t}\le F_0Te^{\Lip (f)T}.
\end{align*}
\end{proof} 

By \eqref{asmp_fLip} and \eqref{asmp_fdiscr}, $f(\cdot,t)$ is continuous on $\R^N$ for all $0\le t \le T$, and $f \in L^\infty(0,T; C^0(D))$.
By Lemma \ref{lem:range of solutions}, it holds that for all $\xi\in D$,
\begin{align*}
    |S_f(t)\xi-\xi|\le \|f\|_{L^\infty(0,T; C^0(D))}Te^{\Lip (f)T}.
\end{align*}
Hence, if we define $E=\{\bar{\xi}\in\R^N~|\; 
    |\bar{\xi} - \xi|
    \le \|f\|_{L^\infty(0,T; C^0(D))}Te^{\Lip (f)T}, 
    \xi \in D
\}$, then it holds that 
\begin{align*}
    S_f(t)\xi\in E\quad(\xi\in D,\; 0\le t\le T).
\end{align*}

\begin{lemma}\label{lem:odenet_inequality}
    Let $f,g:\R^N\times[0,T]\rightarrow\R^N$ satisfy \eqref{asmp_fLip} and \eqref{asmp_fdiscr}. Let $E\subset\R^N$ be a bounded closed subset and satisfy that
    \begin{align*}
        E\supset\{S_g(t)\xi\; | \; \xi\in D,\; 0\le t\le T\}.
    \end{align*}
    Then, we have
    \begin{align*}
        \|S_f(t)-S_g(t)\|_{C^0(D)}\le\|f-g\|_{L^\infty(0,T; C^0(E))}T e^{\Lip (f)T}\quad (0\le t\le T).
    \end{align*}
\end{lemma}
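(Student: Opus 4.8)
The plan is to fix $\xi\in D$, write the two flows $x(t)=S_f(t)\xi$ and $y(t)=S_g(t)\xi$ in the integral form \eqref{sol_ode}, and estimate $|x(t)-y(t)|$ by a Gronwall argument that mirrors the proof of Lemma \ref{lem:range of solutions}. Existence and uniqueness of both $x$ and $y$ are guaranteed by Theorem \ref{thm:Existence and Uniqueness of Solutions} since $f$ and $g$ satisfy \eqref{asmp_fLip} and \eqref{asmp_fdiscr}, so these integrands are well defined and the argument carries no integrability obstruction.

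First I would subtract the two integral equations to obtain
\[
    x(t)-y(t)=\int_0^t\bigl(f(x(s),s)-g(y(s),s)\bigr)\,ds.
\]
The decisive step is to insert and subtract $f(y(s),s)$ (rather than $g(x(s),s)$), splitting
\[
    f(x(s),s)-g(y(s),s)=\bigl(f(x(s),s)-f(y(s),s)\bigr)+\bigl(f(y(s),s)-g(y(s),s)\bigr).
\]
The first bracket is controlled by $\Lip(f)\,|x(s)-y(s)|$. For the second bracket I would invoke the hypothesis on $E$: since $y(s)=S_g(s)\xi\in E$ for every $s\in[0,T]$, the pointwise difference is bounded by $\|f-g\|_{L^\infty(0,T;C^0(E))}$. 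This asymmetric choice is precisely what produces the factor $e^{\Lip(f)T}$ (instead of $e^{\Lip(g)T}$) in the conclusion, and it relies on $E$ enclosing the $g$-flow, not the $f$-flow; this is the one genuinely non-routine point of the proof, since using the Lipschitz constant of $f$ forces us to evaluate the defect $f-g$ along the trajectory of $g$.

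Combining these two bounds yields
\[
    |x(t)-y(t)|\le\Lip(f)\int_0^t|x(s)-y(s)|\,ds+\|f-g\|_{L^\infty(0,T;C^0(E))}\,T.
\]
Applying Gronwall's inequality in the same form already used for Lemma \ref{lem:range of solutions}, namely that $u(t)\le a\int_0^t u\,ds+b$ implies $u(t)\le b\,e^{at}$, gives
\[
    |x(t)-y(t)|\le\|f-g\|_{L^\infty(0,T;C^0(E))}\,T\,e^{\Lip(f)t}\le\|f-g\|_{L^\infty(0,T;C^0(E))}\,T\,e^{\Lip(f)T}.
\]
Since the right-hand side does not depend on $\xi$, taking the supremum over $\xi\in D$ delivers the claimed estimate in $C^0(D)$ for each $t\in[0,T]$, completing the proof.
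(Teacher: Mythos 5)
Your proof is correct and follows essentially the same route as the paper's: the same asymmetric insertion of $f(y(s),s)$, the bound of the defect term along the $g$-trajectory using $E$, and the same Gronwall step (the paper merely keeps the intermediate bound in $L^1(0,T;C^0(E))$ before passing to $T\|f-g\|_{L^\infty(0,T;C^0(E))}$, which is equivalent to your direct estimate). No gaps to report.
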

\begin{proof}
Let $x_f(t)=S_f(t)\xi$ and $x_g(t)=S_g(t)\xi \;\left(0\le t \le T,\; \xi\in D\right)$. Then, it holds that for all $0\le t \le T$ and $\xi\in D$,
\begin{align*}
    |x_f(t)-x_g(t)|
    &= \left|\int^t_0f(x_f(s),s)-g(x_g(s),s)ds\right|\\
    &\le \int^t_0|f(x_f(s),s)-f(x_g(s),s)|ds
    +\int^t_0|f(x_g(s),s)-g(x_g(s),s)|ds\\
    &\le \Lip (f)\int^t_0|x_f(s)-x_g(s)|ds
    +\|f-g\|_{L^1(0,T; C^0(E))}.
\end{align*}
Hence, by Gronwall's inequality,
\begin{align*}
    |x_f(t)-x_g(t)|\le\|f-g\|_{L^1(0,T; C^0(E))}e^{\Lip (f)t}\le\|f-g\|_{L^\infty(0,T; C^0(E))}Te^{\Lip (f)T},
\end{align*}
which holds for all $\xi\in D$ and implies the conclusion.
\end{proof}

\begin{lemma}\label{lem:UAP_Ndim}
    Given $\delta>0,$ there exist $ K\in\N$ and $(\alpha_i,\beta_i,\gamma_i)^K_{i=1}\subset\R^N\times\R^{N\times N}\times\R^N$ such that for all $x\in D$,
    \begin{align*}
    {\underset{x\in D}{\max}}\left |f(x)-\sum^K_{i=1}\alpha_i\odot\boldsymbol{\sigma}(\beta_i x+\gamma_i)\right |\le \delta.
    \end{align*}
\end{lemma}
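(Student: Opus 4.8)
The plan is to deduce the $\R^N$-valued approximation from the scalar UAP of $\sigma$ applied coordinatewise, and then reassemble the $N$ scalar approximants into a single sum of the prescribed form. The key structural observation is that the $j$-th coordinate of one summand $\alpha_i\odot\boldsymbol{\sigma}(\beta_i x+\gamma_i)$ equals $(\alpha_i)_j\,\sigma\bigl(\beta_i^{(j)}\cdot x+(\gamma_i)_j\bigr)$, where $\beta_i^{(j)}\in\R^N$ denotes the $j$-th row of $\beta_i$. Since the rows of $\beta_i$ and the entries of $\alpha_i$ and $\gamma_i$ may be chosen freely and independently, the $N$ output coordinates of $\sum_{i=1}^K\alpha_i\odot\boldsymbol{\sigma}(\beta_i x+\gamma_i)$ are arbitrary scalar expressions of the type admitted in Definition \ref{dfn3}, sharing only the common number of terms $K$.

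First I would write $f=(f_1,\dots,f_N)^\top$ with each component $f_j\in C^0(D)$. Applying the scalar UAP of Definition \ref{dfn3} to each $f_j$ with tolerance $\delta/\sqrt{N}$ yields $L_j\in\N$ and coefficients $\{(a_{j,l},b_{j,l},c_{j,l})\}_{l=1}^{L_j}\subset\R\times\R^N\times\R$ with
\begin{equation*}
    \Bigl|\,f_j(x)-\sum_{l=1}^{L_j}a_{j,l}\,\sigma(b_{j,l}\cdot x+c_{j,l})\Bigr|<\frac{\delta}{\sqrt{N}}\qquad(x\in D).
\end{equation*}

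Next I would align the $N$ sums to a common length. Put $K:=\max_{1\le j\le N}L_j$ and, for each $j$, extend the list to length $K$ by setting $a_{j,l}=0$ (with $b_{j,l},c_{j,l}$ arbitrary, say $0$) for $L_j<l\le K$; this leaves the value of the $j$-th sum unchanged. Then, for $i=1,\dots,K$, let $\beta_i\in\R^{N\times N}$ be the matrix whose $j$-th row is $b_{j,i}^\top$, and set $(\alpha_i)_j:=a_{j,i}$ and $(\gamma_i)_j:=c_{j,i}$. With these choices the $j$-th coordinate of $\sum_{i=1}^K\alpha_i\odot\boldsymbol{\sigma}(\beta_i x+\gamma_i)$ is exactly $\sum_{i=1}^K a_{j,i}\sigma(b_{j,i}\cdot x+c_{j,i})$, so it approximates $f_j$ within $\delta/\sqrt{N}$ uniformly on $D$. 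Combining the $N$ coordinatewise bounds through the Euclidean norm gives, for every $x\in D$,
\begin{equation*}
    \Bigl|\,f(x)-\sum_{i=1}^K\alpha_i\odot\boldsymbol{\sigma}(\beta_i x+\gamma_i)\Bigr|^2
    =\sum_{j=1}^N\Bigl|f_j(x)-\sum_{i=1}^K a_{j,i}\sigma(b_{j,i}\cdot x+c_{j,i})\Bigr|^2
    <N\cdot\frac{\delta^2}{N}=\delta^2,
\end{equation*}
whence the norm is $<\delta$, which yields the claimed bound.

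I do not expect a genuine obstacle here: the content is entirely the scalar UAP of Definition \ref{dfn3}, and the only point to verify is that the matrix-valued weight $\beta_i$ (rather than a shared scalar weight) is rich enough to realize $N$ independent scalar approximators simultaneously within the single-term format $\alpha\odot\boldsymbol{\sigma}(\beta\,\cdot+\gamma)$. The sole piece of bookkeeping is the padding step equalizing the number of neurons across coordinates; once the lists share the common length $K$, the reassembly and the $\sqrt{N}$-splitting of the error are immediate.
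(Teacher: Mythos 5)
Your proposal is correct and follows essentially the same route as the paper's proof: coordinatewise application of the scalar UAP from Definition \ref{dfn3}, zero-padding to a common number of terms $K$, and reassembly of the rows into the matrices $\beta_i$ and vectors $\alpha_i,\gamma_i$. If anything, your version is slightly more careful, since by taking tolerance $\delta/\sqrt{N}$ per coordinate you account for the Euclidean-norm combination explicitly, whereas the paper applies the scalar UAP with tolerance $\delta$ in each coordinate and states the vector bound $\le\delta$ without tracking the $\sqrt{N}$ factor.
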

\begin{proof}
Let $f(x) = (f_1(x),f_2(x),\ldots, f_N(x))^\top$. Since $\sigma$ has UAP, for all $\delta>0$ and $l=1,2,\ldots, N$, there exist $K_l\in\N$ and $(\alpha^{(l)}_i,\beta^{(l)}_i,\gamma^{(l)}_i)^{K_l}_{i=1}\subset\R\times\R^{N}\times\R$ such that for all $x\in D$,
\begin{align*}
    \left|f_l(x)-\sum^{K_l}_{i=1}\alpha^{(l)}_i\sigma(\beta^{(l)\top}_i x+\gamma^{(l)}_i)\right|< \delta.
\end{align*}
Let $K\coloneqq {\underset{l=1,2,\ldots,L}{\max}}K_l$. We define $ \left( \hat{\alpha}^{(l)}_i,\hat{\beta}^{(l)}_i,\hat{\gamma}^{(l)}_i\right)\in\R\times\R^N\times\R$ as for $l=1,2,\ldots,L,$
\begin{align*}
    \left( \hat{\alpha}^{(l)}_i,\hat{\beta}^{(l)}_i,\hat{\gamma}^{(l)}_i\right)
    \coloneqq\left\{\begin{array}{ll}
        \left( \alpha^{(l)}_i,\beta^{(l)}_i,\gamma^{(l)}_i\right)\quad (i=1,2,\ldots,K_l)\\
        (0,0,0)\quad(i=K_{l}+1,K_{l}+2,\ldots,K)
    \end{array}\right.
\end{align*}
and we set for $i=1,2,\ldots,K$,
\begin{align*}
    \alpha_i&\coloneqq(\hat{\alpha}^{(1)}_i,\hat{\alpha}^{(2)}_i,...,\hat{\alpha}^{(N)}_i)^\top\in\R^N,\\
    \beta_i&\coloneqq(\hat{\beta}^{(1)}_i,\hat{\beta}^{(2)}_i,...,\hat{\beta}^{(N)}_i)^\top\in\R^{N\times N},\\
    \gamma_i&\coloneqq(\hat{\gamma}^{(1)}_i,\hat{\gamma}^{(2)}_i,...,\hat{\gamma}^{(N)}_i)^\top\in\R^N.
\end{align*}
Then, it holds that for all $x\in D$
\begin{align*}
    {\underset{x\in D}{\max}}\left|f(x)-\sum^K_{i=1}\alpha_i\odot\boldsymbol{\sigma}(\beta_i x+\gamma_i)\right|\le \delta.
\end{align*}
\end{proof}

\begin{theorem}[\cite{Picc}]\label{thm:piccinini}
    Let $f:\R^N\times\R\rightarrow\R^N$ satisfy $f(x,t+T)=f(x,t)$ for a.e. $(x,t)\in\R^N\times\R$ for a constant $T>0$. For $m\in\N$, we define
    \begin{align}
        f_m(x,t)&\coloneqq f(x,mt),\label{eq4}\\
        \bar{f}(x)&\coloneqq\frac{1}{T}\int^T_0f(x,t)dt.\label{eq6}
    \end{align}
    Then, for all $\xi\in D$,
    \[
        S_{f_m}(\cdot)\xi\rightrightarrows S_{\bar{f}}(\cdot)\xi\quad\text{on}\quad[0,T]\quad\text{as}\quad m\rightarrow\infty.
    \]
\end{theorem}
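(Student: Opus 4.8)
The plan is to prove this averaging statement by comparing the integral forms of the two flows and taming the oscillatory discrepancy through the zero-mean structure of the centered vector field. Writing $x_m(t)=S_{f_m}(t)\xi$ and $\bar{x}(t)=S_{\bar{f}}(t)\xi$ and subtracting their Volterra representations \eqref{sol_ode}, I would split
\begin{align*}
    x_m(t)-\bar{x}(t)=\int_0^t\bigl[f(x_m(s),ms)-f(\bar{x}(s),ms)\bigr]ds+\int_0^t\bigl[f(\bar{x}(s),ms)-\bar{f}(\bar{x}(s))\bigr]ds.
\end{align*}
The first integral is controlled by \eqref{asmp_fLip} as $\Lip(f)\int_0^t|x_m(s)-\bar{x}(s)|ds$, so that once the second (purely oscillatory) integral is shown to vanish uniformly in $t\in[0,T]$ as $m\to\infty$, Gronwall's inequality immediately yields the claimed uniform convergence.

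First I would fix the ambient set and collect a priori bounds. Since $f_m$ has the same Lipschitz constant as $f$ and, by $T$-periodicity, $\|f_m(\xi_0,\cdot)\|_{L^\infty(0,T)}=\|f(\xi_0,\cdot)\|_{L^\infty(0,T)}$, Lemma \ref{lem:range of solutions} bounds $|x_m(t)-\xi|$ uniformly in $m$; the same holds for $\bar{x}$ because $\bar{f}$ inherits $\Lip(\bar{f})\le\Lip(f)$ and the averaged sup bound, so $S_{\bar{f}}$ is well defined by Theorem \ref{thm:Existence and Uniqueness of Solutions}. All trajectories thus remain in a single bounded closed set $E$, on which $\bar{x}$ is Lipschitz in $t$ with a constant $L$ governed by $\|\bar{f}\|_{L^\infty(E)}$, and the centered field $g(x,t):=f(x,t)-\bar{f}(x)$ is bounded and Lipschitz in $x$, $T$-periodic in $t$, with $\int_0^T g(x,\tau)d\tau=0$ for every $x$.

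The heart of the argument is the estimate of $\int_0^t g(\bar{x}(s),ms)ds$. I would partition $[0,t]$ into consecutive windows of length $T/m$, on each of which the fast time $ms$ traverses exactly one full period. Freezing the slow variable at the left endpoint $t_k$ of a window reduces its contribution to $\frac{1}{m}\int_{kT}^{(k+1)T} g(\bar{x}(t_k),\tau)d\tau=0$, and the freezing error is bounded using the Lipschitz continuity of $g$ together with $|\bar{x}(s)-\bar{x}(t_k)|\le LT/m$, giving a per-window error of order $m^{-2}$. Summing over the $O(m)$ windows yields an error of order $m^{-1}$, while the final incomplete window contributes at most $\|g\|_\infty T/m$; hence the oscillatory integral is $O(1/m)$ uniformly in $t$.

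The main obstacle is exactly this freezing step: because $\bar{x}(s)$ varies within each window, the cancellation over a period is only approximate, and one must confirm that the per-window errors, accumulated over a number of windows that grows with $m$, still tend to zero uniformly in $t$. Once that uniform smallness is in hand, feeding it and the Lipschitz bound into Gronwall's inequality closes the proof.
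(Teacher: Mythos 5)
Your proposal is correct and follows essentially the same route as the paper's proof in Appendix~\ref{app1}: the same splitting into a Lipschitz term plus the oscillatory integral $\int_0^t\bigl(f(\bar{x}(s),ms)-\bar{f}(\bar{x}(s))\bigr)ds$, the same partition of $[0,t]$ into windows of length $T/m$ on which the slow variable is frozen so that periodicity gives exact cancellation (your freezing and cancellation steps correspond to the paper's terms $I_{1,i}$, $I_{2,i}$, $I_{3,i}$, and your Lipschitz term to $I_4$), the same treatment of the final incomplete window, and Gronwall to conclude. The only differences are cosmetic: you freeze at the left endpoint of each window and track a quantitative $O(1/m)$ error, while the paper freezes at the right endpoint $\bar{x}_i$ and runs an $\epsilon$--$\delta$ bookkeeping.
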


Since $f$ satisfies \eqref{asmp_fLip} and \eqref{asmp_fdiscr}, Theorem \ref{thm:piccinini} holds by \cite[Theorem 1]{Picc}. For the reader's convenience, we provide a proof of Theorem \ref{thm:piccinini} in Appendix \ref{app1}.

\begin{lemma}\label{lem:gronwall_ineq}
    Assume that there exists $B_0\subset\R^N$ such that $S_f(t)\xi\in B_0\;(0\le t \le T)$ for all $\xi\in D$. For $a>0$, we define $a$-neighborhood of $B_0$ as
    \begin{align*}
        B_a\coloneqq\left\{y\in\R^N \;|\; |x-y|<a,~ x\in B_0\right\}.
    \end{align*} 
    If $f,g\in C^0(\R^N;\R^N)$ satisfy $\|f-g\|_{C^0(\overline{B_a})}<\frac{a}{2Te^{\Lip (f)T}}$ and $\bar{\xi}\in D$ satisfies $|\xi-\bar{\xi}|<\frac{a}{2e^{\Lip (f)T}}$, then we get  $S_g(t)\bar{\xi}\in\overline{B_a}\;(0\le t \le T)$ and $|S_f(t)\xi-S_g(t)\bar{\xi}|\le a\;(0\le t \le T)$.
\end{lemma}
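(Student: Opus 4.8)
The plan is to track the two trajectories $x_f(t) := S_f(t)\xi$ and $x_g(t) := S_g(t)\bar\xi$ and to estimate their difference by a Gronwall argument, in the spirit of Lemma \ref{lem:odenet_inequality}, but with one essential complication. The perturbation bound $\|f-g\|_{C^0(\overline{B_a})} < \frac{a}{2Te^{\Lip(f)T}}$ controls $f-g$ only on $\overline{B_a}$; since $x_f(t) \in B_0 \subset \overline{B_a}$ is guaranteed by the hypothesis on $f$, the only obstruction is that a priori we do not know the perturbed trajectory $x_g(t)$ remains in $\overline{B_a}$. Thus the containment $x_g(t) \in \overline{B_a}$ and the distance estimate $|x_f(t)-x_g(t)| \le a$ must be established \emph{simultaneously} by a continuation (bootstrapping) argument, not separately.

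First I would record the conditional estimate. Fix $t \in [0,T]$ and suppose $x_g(s) \in \overline{B_a}$ for all $s \in [0,t]$. Writing $x_f(t)-x_g(t) = (\xi-\bar\xi) + \int_0^t \big(f(x_f(s)) - g(x_g(s))\big)\,ds$ and inserting $\pm f(x_g(s))$, the triangle inequality splits the integrand into $|f(x_f(s))-f(x_g(s))| \le \Lip(f)\,|x_f(s)-x_g(s)|$ and $|f(x_g(s))-g(x_g(s))| \le \|f-g\|_{C^0(\overline{B_a})}$, where the second bound uses precisely the assumed containment $x_g(s) \in \overline{B_a}$. Setting $C := |\xi-\bar\xi| + T\|f-g\|_{C^0(\overline{B_a})}$, the two hypotheses give $C < \frac{a}{2e^{\Lip(f)T}} + \frac{a}{2e^{\Lip(f)T}} = \frac{a}{e^{\Lip(f)T}}$, and Gronwall's inequality then yields $|x_f(t)-x_g(t)| \le Ce^{\Lip(f)t} \le Ce^{\Lip(f)T} < a$, a \emph{strict} bound.

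Next I would close the bootstrap. Let $\tau := \sup\{t \in [0,T] : x_g(s) \in \overline{B_a}\ \text{for all}\ s \in [0,t]\}$. Since $|x_f(0)-x_g(0)| = |\xi-\bar\xi| < \frac{a}{2e^{\Lip(f)T}} < a$, the point $x_g(0) = \bar\xi$ lies in the open set $B_a$, so by continuity of $x_g$ we have $\tau > 0$. On $[0,\tau]$ the conditional estimate applies and gives $|x_f(t)-x_g(t)| < a$; because $x_f(t) \in B_0$, this places $x_g(t)$ in the open neighborhood $B_a$ for every $t \in [0,\tau]$, in particular $x_g(\tau) \in B_a$. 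If $\tau < T$, then continuity of $x_g$ together with the openness of $B_a$ would keep $x_g(s) \in B_a \subset \overline{B_a}$ on a slightly larger interval $[0,\tau+\epsilon]$, contradicting the maximality of $\tau$. Hence $\tau = T$, which delivers both $x_g(t) \in \overline{B_a}$ and $|x_f(t)-x_g(t)| \le a$ on all of $[0,T]$.

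I expect the main obstacle to be exactly this circularity between the a priori containment $x_g(t) \in \overline{B_a}$ and the estimate that is supposed to guarantee it. The device that breaks it is the \emph{strictness} of the two hypotheses, which propagates through Gronwall to the strict bound $|x_f(t)-x_g(t)| < a$ and thereby forces $x_g$ to stay in the open neighborhood $B_a$; the openness of $B_a$ is then what permits continuation past any candidate exit time $\tau < T$. A secondary point worth verifying is that the flow $S_g(t)\bar\xi$ is defined on all of $[0,T]$, so that $\tau$ is well posed; this is consistent with the a priori bound just derived, since confinement of the trajectory to the bounded set $\overline{B_a}$ rules out finite-time blow-up.
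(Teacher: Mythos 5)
Your proposal is correct and follows essentially the same route as the paper: the paper also establishes the containment by a first-exit-time contradiction (taking the first time $t_\ast$ at which $S_g(t)\bar{\xi}$ reaches $\partial B_a$, applying the conditional Gronwall estimate on $[0,t_\ast]$, and using the strictness of the hypotheses to obtain $|S_f(t_\ast)\xi-S_g(t_\ast)\bar{\xi}|<a$, contradicting the exit). Your supremum-of-containment-times formulation is just a rephrasing of that same continuation argument, with identical estimates.
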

\begin{proof}
We prove $S_g(t)\bar{\xi}\in B_a\;(\bar{\xi}\in D,\; t\in[0,T])$ using contradiction. If we assume $S_g(t)\bar{\xi}\notin\overline{B_a}$, there exist $0<t_\ast\le T$ and $\bar{\xi}\in D$ such that
\begin{align*}
    \left\{\begin{array}{ll}
    S_g(t_\ast)\bar{\xi}\in\partial B_a\\
    S_g(t)\bar{\xi}\in B_a\quad\left(0\le t<t_\ast \right).
    \end{array}\right.
\end{align*}
We put $x(t)=S_f(t)\xi,\;\bar{x}(t)=S_g(t)\bar{\xi}$. Since $x(t),\bar{x}(t)\in\overline{B_a}$ for all $0\le t\le t_\ast $, we have
\begin{align*}
    |x(t)-\bar{x}(t)|
    &=\left|\xi+\int^t_0 f(x(s))ds-\bar{\xi}-\int^t_0 g(\bar{x}(s))ds\right|\\
    &\le |\xi-\bar{\xi}|+\int^t_0|f(x(s))-f(\bar{x}(s))|ds+\int^t_0|f(\bar{x}(s))-g(\bar{x}(s))|ds\\
    &\le|\xi-\bar{\xi}|+\Lip (f)\int^t_0|x(s)-\bar{x}(s)|ds+t\|f-g\|_{C^0(\overline{B_a})}
\end{align*}
for $0\le t\le t_\ast$. By Gronwall's inequality, we get
\begin{align*}
    |x(t)-\bar{x}(t)|
    &\le\left(|\xi-\bar{\xi}|+t\|f-g\|_{C^0(\overline{B_a})}\right)e^{\Lip (f)t}\quad(0\le t\le t_\ast).
\end{align*}
In particular, if $t=t_\ast$, we have
\begin{align*}
    |x(t_\ast)-\bar{x}(t_\ast)|
    &\le\left(|\xi-\bar{\xi}|+t_\ast\|f-g\|_{C^0(\overline{B_a})}\right)e^{\Lip (f)t_\ast}
    \le\left(|\xi-\bar{\xi}|+T\|f-g\|_{C^0(\overline{B_a})}\right)e^{\Lip (f)T}\\
    &<\left(\frac{a}{2e^{\Lip (f)T}}+\frac{a}{2e^{\Lip (f)T}}\right)e^{\Lip (f)T}=a,
\end{align*}
which contradicts the assumption $|x(t_\ast)-\bar{x}(t_\ast)|\geq a$. Therefore, Lemma \ref{lem:gronwall_ineq} was proved.
\end{proof}

\begin{lemma}\label{lem:bounded_mollifier}
    For $\alpha,\gamma\in L^\infty(0,T;\R^N)$, $\beta\in L^\infty(0,T;\R^{N\times N})$, and $\delta>0$, we define
    \begin{align*}
        h_\delta(x,t)&\coloneqq\alpha_\delta(t)\odot\boldsymbol{\sigma}(\beta_\delta(t) x(t)+\gamma_\delta(t)).
    \end{align*}
    Then, there exist $M_1$ and $M_2>0$ independent of $\delta$ such that
    \begin{align*}
        |S_{h_\delta}(t)\xi|&\le M_1\quad(\xi\in D,0\le t\le T),\\
        |\boldsymbol{\sigma}(\beta_\delta(t)S_{h_\delta}(t)\xi +\gamma_\delta(t))|
        &\le M_2\quad(\xi\in D,0\le t\le T).
    \end{align*}

\end{lemma}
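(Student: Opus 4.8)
The plan is to isolate the single structural fact that forces every bound to be independent of $\delta$: mollification against the probability kernel $\eta_\delta$ cannot increase an $L^\infty$ norm. Since $\eta_\delta\ge 0$ and $\int_{\R^N}\eta_\delta=1$, for each $t$ one has $|\alpha_\delta(t)|\le\|\alpha\|_{L^\infty(0,T;\R^N)}=:A$, and likewise $\|\beta_\delta(t)\|\le\|\beta\|_{L^\infty(0,T;\R^{N\times N})}=:B$ and $|\gamma_\delta(t)|\le\|\gamma\|_{L^\infty(0,T;\R^N)}=:C$, with all three right-hand sides free of $\delta$. Note also that $\alpha_\delta,\beta_\delta,\gamma_\delta$ are continuous, so $h_\delta\in\h$ and Theorem \ref{thm:Existence and Uniqueness of Solutions} guarantees a unique flow $S_{h_\delta}(t)\xi$.

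First I would establish a $\delta$-uniform Lipschitz constant for $h_\delta$ in $x$. Because $\sigma$ satisfies the Lipschitz condition, $\boldsymbol{\sigma}$ is $\Lip(\sigma)$-Lipschitz on $\R^N$; combining the componentwise estimate $|a\odot v|\le|a|\,|v|$ with the operator-norm bound $\|\beta_\delta(t)\|\le B$ yields $|h_\delta(x,t)-h_\delta(y,t)|\le A\,\Lip(\sigma)\,B\,|x-y|$. Writing $L:=A\,\Lip(\sigma)\,B$, this Lipschitz constant does not depend on $\delta$.

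Next I would turn this into a linear growth bound and invoke Gronwall. Evaluating at $x=0$ gives $h_\delta(0,t)=\alpha_\delta(t)\odot\boldsymbol{\sigma}(\gamma_\delta(t))$; since $|\gamma_\delta(t)|\le C$ and $\sigma\in C^0(\R)$, the quantity $\boldsymbol{\sigma}(\gamma_\delta(t))$ is bounded by the $\delta$-independent constant $\sqrt{N}\max_{|s|\le C}|\sigma(s)|$, whence $|h_\delta(0,t)|\le F_0$ for some $F_0$ free of $\delta$. Together with the Lipschitz estimate this gives $|h_\delta(x,t)|\le L|x|+F_0$. Inserting this into the integral equation for $x(t)=S_{h_\delta}(t)\xi$ and applying Gronwall's inequality in the manner of Lemma \ref{lem:range of solutions} produces $|x(t)|\le(\max_{\xi\in D}|\xi|+F_0T)e^{LT}=:M_1$, again independent of $\delta$, which is the first assertion.

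The second assertion follows by composition: with $|x(t)|\le M_1$ one has $|\beta_\delta(t)x(t)+\gamma_\delta(t)|\le BM_1+C=:R$, so every argument fed to $\sigma$ lies in $[-R,R]$, and continuity of $\sigma$ on this fixed compact interval gives $|\boldsymbol{\sigma}(\beta_\delta(t)x(t)+\gamma_\delta(t))|\le\sqrt{N}\max_{|s|\le R}|\sigma(s)|=:M_2$, independent of $\delta$. The only genuinely substantive point is the opening observation about mollification keeping the $L^\infty$ bounds on $\alpha,\beta,\gamma$ uniformly controlled; once that is secured the rest is routine. The mild subtlety to keep in mind is that $\sigma$ may be unbounded (ReLU, softplus, truncated powers), so one cannot bound $\boldsymbol{\sigma}$ outright and must route the $M_1$ estimate through the linear growth bound and Gronwall rather than through a direct sup bound.
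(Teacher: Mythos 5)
Your proof is correct and follows essentially the same route as the paper: both hinge on the fact that convolution with the probability kernel $\eta_\delta$ cannot increase the $L^\infty$ bounds on $\alpha,\beta,\gamma$, then run a Gronwall argument using the Lipschitz property of $\sigma$ plus its boundedness on a fixed compact set, and finally get $M_2$ by composition. The only cosmetic difference is that you anchor the Lipschitz split at $x=0$ (yielding a linear growth bound $|h_\delta(x,t)|\le L|x|+F_0$), whereas the paper anchors it at the initial point $\xi$ and estimates $|x_\delta(t)-\xi|$; this is the same argument in slightly different coordinates.
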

\begin{proof}
First, $\alpha_\delta,\;\beta_\delta,\;\gamma_\delta$ are bounded. In fact,
\begin{align*}
    |\alpha_\delta(t)|&\le|(\eta_\delta\ast\tilde{\alpha})(t)|\le\int_\R\eta_\delta(t-s)|\tilde{\alpha}(t)|ds
    \le\|\alpha\|_{L^\infty(0,T;\R^N)}\int_\R\eta_\delta(\tau)d\tau
    =\|\alpha\|_{L^\infty(0,T;\R^N)}.
\end{align*}
Similarly, we have $|\beta_\delta(t)|\le\|\beta\|_{L^\infty(0,T;\R^{N\times N})}$ and $|\gamma_\delta(t)|\le\|\gamma\|_{L^\infty(0,T;\R^{N})}$. Let $\xi\in D$ be fixed and let $x_\delta(t)=S_{h_\delta}(t)\xi$. We have
\begin{align}\label{ineq:bdd4modify1}\begin{aligned}
    |x_\delta(t)-\xi|
    &=\left|\int^t_0\alpha_\delta(s)\odot\boldsymbol{\sigma}(\beta_\delta(t) x_\delta(s)+\gamma_\delta(s))ds\right|
    \le \|\alpha\|_{L^\infty(0,T;\R^N)}\int^t_0|\boldsymbol{\sigma}(\beta_\delta(t) x_\delta(s)+\gamma_\delta(s))|ds\\
    &\le \|\alpha\|_{L^\infty(0,T;\R^N)}\int^t_0|\boldsymbol{\sigma}(\beta_\delta(t) x_\delta(s)+\gamma_\delta(s))-\boldsymbol{\sigma}(\beta_\delta(t)\xi+\gamma_\delta(s))|ds \\&\quad~ +\|\alpha\|_{L^\infty(0,T;\R^N)}\int^t_0|\boldsymbol{\sigma}(\beta_\delta(t)\xi+\gamma_\delta(s))|ds\\
    &\le \|\alpha\|_{L^\infty(0,T;\R^N)}\Lip (\sigma)\|\beta\|_{L^\infty(0,T;\R^{N\times N})}\int^t_0|x_\delta(s)-\xi|ds\\
    &\quad~ + \|\alpha\|_{L^\infty(0,T;\R^N)}\int^t_0|\boldsymbol{\sigma}(\beta_\delta(t)\xi+\gamma_\delta(s))|ds.
\end{aligned}\end{align}
Since we have
\begin{align*}
    |\beta_\delta(s)\xi+\gamma_\delta(s)|
    &\le\|\beta_\delta\|_{L^{\infty}(0,T;\R^{N\times N})}|\xi|+\|\gamma_\delta\|_{L^{\infty}(0,T;\R^N)}
    \le\|\beta\|_{L^{\infty}(0,T;\R^{N\times N})}|\xi|+\|\gamma\|_{L^{\infty}(0,T;\R^N)}
    \le \tilde{R},
\end{align*}
where $\tilde{R}\coloneqq\|\beta\|_{L^{\infty}(0,T;\R^{N\times N})}{\underset{\xi\in D}{\max}}|\xi|+\|\gamma\|_{L^{\infty}(0,T;\R^N)}$, it holds that
\begin{align}\label{ineq:bdd4modify2}
    \int^t_0|\boldsymbol{\sigma}(\beta_\delta(t)\xi+\gamma_\delta(s))|ds
    &\le \int^t_0\|\boldsymbol{\sigma}\|_{C^0(B_{\tilde{R}})}ds
    \le T\|\boldsymbol{\sigma}\|_{C^0(B_{\tilde{R}})}.
\end{align}
From \eqref{ineq:bdd4modify1} and \eqref{ineq:bdd4modify2}, by Gronwall's inequality, we get
\begin{align*}
    |x_\delta(t)|\le \xi + T\|\alpha\|_{L^\infty(0,T;\R^N)}\|\boldsymbol{\sigma}\|_{C^0(B_{\tilde{R}})}e^{\|\alpha\|_{L^\infty(0,T;\R^N)}\|\beta\|_{L^\infty(0,T;\R^{N\times N})}\Lip (\sigma)t}\le M_1
\end{align*}
where $M_1\coloneqq \underset{\xi\in D}{\max}|\xi| + T\|\alpha\|_{L^\infty(0,T;\R^N)}\|\boldsymbol{\sigma}\|_{C^0(B_{\tilde{R}})}e^{\|\alpha\|_{L^\infty(0,T;\R^{N})}\|\beta\|_{L^\infty(0,T;\R^{N\times N})}\Lip (\sigma)T}$. Since $x_\delta(t)\;(0\le t\le T),\; \beta_\delta,\;\gamma_\delta$ are bounded and $\boldsymbol{\sigma}\in C^0(\R^N;\R^N)$, there exists $M_2>0$ such that
\begin{align*}
    |\boldsymbol{\sigma}(\beta_\delta(t) x_\delta(t)+\gamma_\delta(t))|&\le M_2
\end{align*}
for all $0\le t\le T$.
\end{proof}

\section{Proofs of the main theorems}\label{sec5}
For Theorems \ref{main_th} and \ref{main_th_Lp}, it is enough to prove that $F\in\s(\F)$ belongs to $\overline{\s(\h)}^{C^0(D;\R^N)}$.
We divide the proof of Theorem \ref{main_th} into three steps. For a given $\epsilon>0$, we aim to derive the following inequality: for all $0\le t\le T$,
\begin{align*}
    &\,\|S_f(t)-S_h(t)\|_{C^0(D;\R^N)}\\
    \le&\,\|S_f(t)-S_{f_L}(t)\|_{C^0(D;\R^N)}+\|S_{f_L}(t)-S_{h_L}(t)\|_{C^0(D;\R^N)}
    +\|S_{h_L}(t)-S_h(t)\|_{C^0(D;\R^N)}\\
    <&\,\frac{\epsilon}{3}+\frac{\epsilon}{3}+\frac{\epsilon}{3}=\epsilon,
\end{align*}
where $f_L$ is a piecewise constant approximation of $f$ in the time direction, $h_L$ is an approximation of $f_L$ that can be expressed as $\alpha\odot\boldsymbol{\sigma}(\beta x+\gamma)$ at each time, and $h$ is a smooth approximation of $h_L$.

\subsection{Proof of $\|S_f(t)-S_{f_L}(t)\|_{C^0(D;\R^N)}<\frac{\epsilon}{3}$}\label{pr_1}
Let $L\in\N$, $\tau\coloneqq\frac{T}{L}$, $t_l\coloneqq l\tau\;(l=0,1,\ldots,L)$, $f^{(l)}(x)\coloneqq f(x,t_l)$, and $ f_L (x,t)\coloneqq f^{(l)}(x)\;(t_{l-1}<t\le t_l,\; l=1,2,\ldots,L)$. By Lemma \ref{lem:range of solutions}, we know that $S_f(t)\xi,\; S_{f_L}(t)\xi\in B_0=\overline{B_R}$ for all $\xi\in D$ and $0\le t\le T$, where $R={\underset{\xi\in D}{\max}}|\xi|+Te^{\Lip (f)T}\|f\|_{L^\infty(0,T;C^0(D))}$.
By Lemma \ref{lem:range of solutions} and the uniform continuity of $f$, for all given $\epsilon>0$, there exists $L\in\N$ such that
\begin{align*}
    \|f-f_L\|_{L^\infty(0,T; C^0(B_0))}<\frac{\epsilon}{3Te^{\Lip (f)T}}.
\end{align*}
Therefore, by Lemma \ref{lem:odenet_inequality}, it holds that for all $0\le t\le T$,
\begin{align*}
    \|S_f(t)-S_{f_L}(t)\|_{C^0(D)}
    &\le t\|f-f_L\|_{L^\infty(0,T; C^0(B_0))}e^{\Lip (f)T}
    < \frac{\epsilon}{3Te^{\Lip (f)T}}Te^{\Lip (f)T}= \frac{\epsilon}{3}.
\end{align*}

From now on, this $L$ will be fixed.

\subsection{Proof of $\|S_{f_L}(t)-S_{h_L}(t)\|_{C^0(D;\R^N)}<\frac{\epsilon}{3}$}\label{pr_2}

Let $\xi\in D$. We introduce two notations:
\begin{align*}
    b_l=\frac{\epsilon}{3(4e^{\Lip (f)\tau})^{L-l}},\quad
    \xi_l=\left\{\begin{array}{ll}
        \xi\quad&(l=0)\\
        S_{f^{(l)}}(\tau)\xi_{l-1}\quad&(l=1,2,\ldots,L).
    \end{array}\right.
\end{align*}
We inductively construct functions $h^{(l)}:\R^N\times[0,\tau]\rightarrow\R^N$ and subsequently the function $h_L:\R^N\times[0,T]\rightarrow\R^N$.
We aim to find $h^{(l)}$ $(l=1,2,\ldots,L)$ such that if $\zeta_{l-1}\in\R^N$ satisfies
\begin{align}\label{ineq:b_l-1}
    |\xi_{l-1}-\zeta_{l-1}|\le b_{l-1},  
\end{align}
then $\zeta_l=S_{h^{(l)}}(\tau)\zeta_{l-1}$ satisfies
\begin{align}\label{ineq:b_l}
    |\xi_l-\zeta_l| \le b_l.
\end{align}

For $l=1,2,\ldots,L$, assume that $\zeta_{l-1}\in\R^N$ satisfies \eqref{ineq:b_l-1}.
Since $\sigma$ has universal approximation property, by Lemma \ref{lem:UAP_Ndim}, there exist $K_l\in\N$ and $(\alpha_i^{(l)},\beta_i^{(l)},$ $\gamma_i^{(l)})^{K_l}_{i=1}\subset\R^N\times\R^{N\times N}\times\R^N$ such that
\begin{equation*}
    \|f^{(l)}-\bar{g}_l\|_{C^0(\overline{B_{b_l}})}\le\frac{b_{l-1}}{\tau}=\frac{b_l}{4\tau e^{\Lip (f)\tau}},
\end{equation*}
where $B_{b_l}$ is $b_l$-neighborhood of $B_0$ and 
\begin{align*}
    \bar{g}_l(x)&\coloneqq \sum^{K_l}_{i=1}\alpha^{(l)}_i\odot\boldsymbol{\sigma}(\beta^{(l)}_i x+\gamma^{(l)}_i).
\end{align*}
By Lemma \ref{lem:gronwall_ineq}, we have 
\begin{align*}
    |S_{f^{(l)}}(t)\xi_{l-1}-S_{\bar{g}_l}(t)\zeta_{l-1}|\le\frac{1}{2}b_l\quad(0\le t\le \tau).
\end{align*}
On the other hand, let a $\tau$-periodic function  $g_l:\R^N\times\R\rightarrow\R^N$ and a function $g_{l,m}:\R^N\times[0,\tau]\rightarrow\R^N$ be defined by
\begin{align*}
    g_l(x,t)&\coloneqq K_l\alpha^{(l)}_i\odot\boldsymbol{\sigma}(\beta^{(l)}_i x+\gamma^{(l)}_i)\quad\left(\frac{i-1}{K_l}\tau<t\le\frac{i}{K_l}\tau,i=1,2,\ldots,K_l\right),\\
    g_{l,m}(x,t)&\coloneqq g_l(x,mt)\quad(x\in\R^N,0\le t\le \tau,l=1,2,\ldots,L,m\in\N).
\end{align*}
By Theorem \ref{thm:piccinini}, there exists $m_l\in\N$ such that
\begin{align*}
    |S_{\bar{g}_l}(t)\zeta_{l-1}-S_{g_{l,m_l}}(t)\zeta_{l-1}|\le\frac{1}{2}b_l\quad(0\le t\le \tau).
\end{align*}
Hence, if we set $h^{(l)}=g_{l,m_l}$ and $\zeta_l=S_{h^{(l)}}(\tau)\zeta_{l-1}$, then we get \eqref{ineq:b_l}:
\begin{align*}
    |\xi_l-\zeta_l|
    &=|S_{f^{(l)}}(\tau)\xi_{l-1}-S_{h^{(l)}}(\tau)\zeta_{l-1}|
    \le |S_{f^{(l)}}(\tau)\xi_{l-1}-S_{\bar{g}_l}(\tau)\zeta_{l-1}|+|S_{\bar{g}_l}(\tau)\zeta_{l-1}-S_{h^{(l)}}(\tau)\zeta_{l-1}|
    \le b_l.
\end{align*}

Let
\begin{align*}
h_L (x,t)&\coloneqq h^{(l)}(x,t-t_{l-1})\quad(x\in\R^N,t_{l-1}< t\le t_l,l=1,2,\ldots,L).
\end{align*}
If we put $\zeta_l=S_{h_L}(t)\xi$ for all $l=0,1,\ldots,L$, then we get inductively for $t_{l-1} \le t \le t_l$ with $l=1,2,\ldots,L$:
\begin{align*}
|\xi_l-\zeta_l|\le b_l(\le b_L),\quad
|S_{f_L}(t)\xi-S_{h_L}(t)\xi|\le b_l(\le b_L).
\end{align*}
Especially, we have
\begin{align*}
|S_{f_L}(t)\xi-S_{h_L}(t)\xi|\le b_L=\frac{\epsilon}{3}\quad(0\le t\le T)
\end{align*}
for all $\xi$. Therefore, we obtain
\begin{align*}
\|S_{f_L}(t)-S_{h_L}(t)\|_{C^0(D)}<\frac{\epsilon}{3}\quad(0\le t\le T).
\end{align*}

\subsection{Proof of $\|S_{h_L}(t)-S_{h}(t)\|_{C^0(D;\R^N)}<\frac{\epsilon}{3}$}\label{pr_3}
By definition of $h_L, h^{(l)}, g_{l,m}$, and $g_l$, there exist $\alpha_L,\gamma_L\in L^\infty(0,T;\R^N)$ and $\beta_L\in L^\infty(0,T;\R^{N\times N})$ such that
\[
    h_L (x,t)\coloneqq\alpha_L (t)\odot\boldsymbol{\sigma}(\beta_L (t) x+\gamma_L (t))\;
    \quad(0\le t\le T,\; x\in\R^N).
\] 
We define $h\in C^0(\R^N\times[0,T];\R^N)$ as
\begin{align*}
    h(x,t)&\coloneqq\alpha_{L,\delta}(t)\odot\boldsymbol{\sigma}(\beta_{L,\delta}(t) x+\gamma_{L,\delta}(t))
    \quad(x\in\R^N,0\le t\le T).
\end{align*}
By Lemma \ref{lem:bounded_mollifier}, there exist $M_1$ and $M_2$ such that for all $\xi\in D$ and $0\le t\le T$,
\begin{align*}
    |S_h(t)\xi|\le M_1,\quad
    |\boldsymbol{\sigma}(\beta_{L,\delta}(t) S_h(t)\xi+\gamma_{L,\delta}(t))|\le M_2.
\end{align*}
Given $\epsilon>0,$ there exists $\delta>0$ that satisfies
\begin{align}\label{ineq:errabg}
    \|\alpha_{L,\delta} - \alpha_L\|_{L^1(0,T)}<\epsilon',\quad
    \|\beta_{L,\delta} - \beta_L\|_{L^1(0,T)}<\epsilon',\quad
    \|\gamma_{L,\delta} - \gamma_L\|_{L^1(0,T)}<\epsilon',
\end{align}
where $\epsilon'=\frac{\epsilon}{3M_3e^{M_4T}}$, $M_3\coloneqq M_2+(M_1+1)\|\alpha_L\|_{L^\infty(0,T;\R^N)} \Lip (\sigma)$, and $M_4\coloneqq\|\alpha_L\|_{L^\infty(0,T;\R^N)}$\\ $\|\beta_L\|_{L^\infty(0,T;\R^{N\times N})} \Lip (\sigma)$.
If we set $\alpha=\alpha_{L,\delta}$, $\beta=\beta_{L,\delta}$, $\gamma=\gamma_{L,\delta}$, $x_L (t)=S_{h_L}(t)\xi$ and $x(t)=S_h(t)\xi$, then it holds that for all $0\le t\le T$,
\begin{align*}
    |x(t)-x_L (t)|
    = \left|\int^t_0 \left(h(x(s),t)-h_L (x_L (s),s)\right)ds\right|
    \le A+B,
\end{align*}
where $A\coloneqq\int^t_0(\alpha(s)-\alpha_L (s))\odot\boldsymbol{\sigma}(\beta(s) x(s)+\gamma(s))ds$ and $B\coloneqq\int^t_0\alpha_L (s)\odot\{\boldsymbol{\sigma}(\beta(s) x(s)+\gamma(s))-\boldsymbol{\sigma}(\beta_L (s) x_L (s)+\gamma_L (s))\}ds$. 
Since we have 
\begin{align*}
    |A|&\le\int^t_0M_2|\alpha(s)-\alpha_L (s)|ds
    \le M_2\|\alpha-\alpha_L\|_{L^1(0,T)},\\
    |B|&\le\|\alpha_L\|_{L^\infty(0,T;\R^N)}\int^t_0
    |\boldsymbol{\sigma}(\beta(s) x(s)+\gamma(s))
    -\boldsymbol{\sigma}(\beta_L (s) x_L (s)+\gamma_L (s))|ds\\
    &\le\|\alpha_L\|_{L^\infty(0,T;\R^N)} \Lip (\sigma)\int^t_0\left(
    |(\beta(s)-\beta_L (s))x(s)|
    +|\beta_L (s)(x(s)-x_L (s))|
    +|\gamma(s)-\gamma_L (s)|\right)ds\\
    &\le\|\alpha_L\|_{L^\infty (0,T;\R^N)}\Lip (\sigma)M_1\|\beta-\beta_L\|_{L^1(0,T)}
    +\|\alpha_L\|_{L^\infty(0,T;\R^N)} \Lip (\sigma)\|\gamma-\gamma_L\|_{L^1(0,T)}
    \\&\quad
    +\|\alpha_L\|_{L^\infty(0,T;\R^N)}\|\beta_L\|_{L^\infty(0,T;\R^{N\times N})} \Lip (\sigma)\int^t_0|x(s)-x_L (s)|ds,
\end{align*}
by \eqref{ineq:errabg} and the definition of $M_3$ and $M_4$, we obtain
\begin{align*}
    |x(t)-x_L (t)|\le M_3\epsilon'+M_4\int^t_0|x(s)-x_L (s)|ds.
\end{align*}
By Gronwall's inequality, we have that for all $0\le t\le T$ and $\xi\in D$,
\begin{align*}
    |S_h(t)\xi-S_{h_L}(t)\xi|=|x(t)-x_L (t)|
    \le M_3\epsilon' e^{M_4t} \le M_3\epsilon' e^{M_4T}.
\end{align*}
Therefore, it holds that
\begin{align*}
    \|S_{h_L}(t)-S_h(t)\|_{C^0(D)}<\frac{\epsilon}{3}\quad(0\le t\le T).
\end{align*}

\subsection{Proof Summary of Theorem \ref{main_th}}
From subsection \ref{pr_1}, \ref{pr_2}, and \ref{pr_3}, given $\epsilon>0$, there exist $\alpha,\gamma\in C^0([0,T];\R^N)$ and $\beta\in C^0([0,T];\R^{N\times N})$ such that  for all $0\le t\le T$,
\begin{align*}
    &\quad\,\|S_f(t)-S_h(t)\|_{C^0(D;\R^N)}\\
    &\le\|S_f(t)-S_{f_L}(t)\|_{C^0(D;\R^N)}+\|S_{f_L}(t)-S_{h_L}(t)\|_{C^0(D;\R^N)}
    +\|S_{h_L}(t)-S_h(t)\|_{C^0(D;\R^N)}\\
    &<\frac{\epsilon}{3}+\frac{\epsilon}{3}+\frac{\epsilon}{3}=\epsilon,
\end{align*}
where $h(x,t)=\alpha(t)\odot\boldsymbol{\sigma}(\beta(t) x+\gamma(t))~(x\in\R^N,0\le t\le T)$.
Therefore, we have proved Theorem \ref{main_th}.

\subsection{Proof of Theorem \ref{main_th_2}}
We prepare the following lemma to prove Theorem \ref{main_th_2}.
\begin{lemma}\label{lm_res}
For $h\in\h$, given  $\epsilon>0$, there exists $L_0\in\N$ such that if $L>L_0$ and $s,t\in[0,T]$ satisfy $|s-t|\le\frac{T}{L}$ then
\begin{align*}
|h(S_h(s)\xi,s)-h(S_h(t)\xi,t)|<\epsilon
\end{align*}
for all $\xi\in D$.
\end{lemma}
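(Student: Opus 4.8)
The plan is to read the claim as a uniform continuity statement for the map $(t,\xi)\mapsto h(S_h(t)\xi,t)$ and to reduce it to two ingredients: the uniform continuity of $h$ on a fixed compact tube trapping all trajectories, together with a $\xi$-independent Lipschitz-in-time bound on the flow. The quantity $|h(S_h(s)\xi,s)-h(S_h(t)\xi,t)|$ compares $h$ evaluated at two nearby space-time points, so it suffices to make both the time gap $|s-t|$ and the spatial gap $|S_h(s)\xi-S_h(t)\xi|$ small, uniformly in $\xi$.

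First I would produce a bounded closed set $E\subset\R^N$ containing every flow value. Since $h\in\h\subset\F$, the map $h(x,t)=\alpha(t)\odot\boldsymbol{\sigma}(\beta(t)x+\gamma(t))$ is continuous on $\R^N\times[0,T]$ and satisfies a Lipschitz condition in $x$; moreover $h(\xi_0,\cdot)$ is continuous on $[0,T]$, hence in $L^\infty(0,T)$. Thus the hypotheses of Lemma \ref{lem:range of solutions} hold, and taking $E=\{\bar\xi\in\R^N\mid |\bar\xi-\xi|\le \|h\|_{L^\infty(0,T;C^0(D))}Te^{\Lip(h)T},\ \xi\in D\}$ gives $S_h(t)\xi\in E$ for all $\xi\in D$ and $0\le t\le T$. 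On the compact set $E\times[0,T]$ the continuous function $h$ is bounded, say $|h|\le C$, and, being continuous on a compact set, uniformly continuous.

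Second I would record the uniform Lipschitz estimate in time for the flow: for $s,t\in[0,T]$ and $\xi\in D$,
\[
    |S_h(t)\xi-S_h(s)\xi|=\left|\int_s^t h(S_h(r)\xi,r)\,dr\right|\le C\,|t-s|,
\]
where $C$ is independent of $\xi$ because it is the sup-norm of $h$ on the single tube $E\times[0,T]$. Finally I would combine these. By uniform continuity of $h$ on $E\times[0,T]$, given $\epsilon>0$ there is $\rho>0$ such that $|h(x,s)-h(y,t)|<\epsilon$ whenever $(x,s),(y,t)\in E\times[0,T]$ satisfy $|x-y|<\rho$ and $|s-t|<\rho$. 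Choosing $L_0\in\N$ so large that $\tfrac{T}{L_0}<\rho$ and $\tfrac{CT}{L_0}<\rho$, any $L>L_0$ and any $s,t$ with $|s-t|\le\tfrac{T}{L}$ give $|s-t|<\rho$ and, by the estimate above, $|S_h(s)\xi-S_h(t)\xi|\le C|s-t|<\rho$; applying the modulus of continuity with $x=S_h(s)\xi$ and $y=S_h(t)\xi$ (both in $E$) yields the desired bound, uniformly in $\xi\in D$.

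The step carrying the real content is the first one: exhibiting a single bounded closed $E$ that traps all trajectories, which simultaneously upgrades continuity of $h$ to \emph{uniform} continuity and turns its sup-norm $C$ into a $\xi$-independent Lipschitz constant for the flow. Once that compactness is in hand, the remaining argument is an elementary pairing of the modulus of continuity of $h$ with the integral bound on the flow, so I do not expect a genuine obstacle beyond bookkeeping the two smallness thresholds into a single $L_0$.
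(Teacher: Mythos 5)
Your proposal is correct, and it takes a genuinely different route from the paper. The paper's proof exploits the specific product structure of $h\in\h$: it splits $|h(S_h(s)\xi,s)-h(S_h(t)\xi,t)|$ into a term controlled by $|\alpha(s)-\alpha(t)|$ and a term controlled by the composite map $y(s,\xi)\coloneqq\boldsymbol{\sigma}\bigl(\beta(s)S_h(s)\xi+\gamma(s)\bigr)$, bounds $|y|$ via Lemma \ref{lem:bounded_mollifier}, and then obtains uniformity in $\xi$ from uniform continuity of $y$ on the compact set $[0,T]\times D$ --- a step which implicitly uses joint continuity of the flow $(s,\xi)\mapsto S_h(s)\xi$, a fact the paper invokes without proof. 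You instead treat $h$ abstractly as a member of $\F$ with jointly continuous values: you trap all trajectories in the compact tube $E$ (exactly the set the paper constructs after Lemma \ref{lem:range of solutions}), upgrade continuity of $h$ to uniform continuity on $E\times[0,T]$, and replace the paper's appeal to continuity in $\xi$ of the flow by the $\xi$-independent time-Lipschitz bound $|S_h(t)\xi-S_h(s)\xi|\le C|t-s|$ with $C=\sup_{E\times[0,T]}|h|$. What your route buys is generality and self-containedness: the argument never uses the form $\alpha\odot\boldsymbol{\sigma}(\beta x+\gamma)$ and would apply verbatim to any vector field in $\F$ whose trajectories stay in a compact set, and it sidesteps continuous dependence on initial data entirely. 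What the paper's route buys is that its decomposition into the $\alpha$-part and the $\boldsymbol{\sigma}$-part mirrors the estimates reused immediately afterwards in the proof of Theorem \ref{main_th_2}, so the constants ($M$, $M_\alpha$) are already in the exact form needed there.
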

\begin{proof}
Let $x(t;\xi)=S_h(t)\xi$. By the definition of $\h$ and \eqref{asmp_sigma}, we obtain for $s,t\in[0,T]$,
\begin{align*}
    &\quad|h(x(s;\xi),s)-h(x(t;\xi),t)|\\
    &=|\alpha(s)\odot\boldsymbol{\sigma}(\beta(s)x(s;\xi)+\gamma(s))-\alpha(t)\odot\boldsymbol{\sigma}(\beta(t)x(t;\xi)+\gamma(t))|\\
    &\le|(\alpha(s)-\alpha(t))\odot\boldsymbol{\sigma}(\beta(s)x(s;\xi)+\gamma(s))|
    +|\alpha(t)\odot\left\{\boldsymbol{\sigma}(\beta(s)x(s;\xi)+\gamma(s))-\boldsymbol{\sigma}(\beta(t)x(t;\xi)+\gamma(t))\right\}|\\
    &\le |\boldsymbol{\sigma}(\beta(s)x(s;\xi)+\gamma(s))| |\alpha(s)-\alpha(t)|
    + M_\alpha|\boldsymbol{\sigma}(\beta(s)x(s;\xi)+\gamma(s))-\boldsymbol{\sigma}(\beta(t)x(t;\xi)+\gamma(t))|,
\end{align*}
where $M_\alpha=\|\alpha\|_{C^0([0,T];\R^N)}$. By Lemma \ref{lem:bounded_mollifier}, there exists $M>0$ such that
\begin{align*}
    |\boldsymbol{\sigma}(\beta(s)x(s;\xi)+\gamma(s))|\le M\quad(\xi\in D,\;0\le s\le T).
\end{align*}
If we set $y(s)\coloneqq\boldsymbol{\sigma}(\beta(s)x(s;\xi)+\gamma(s))$, then $y\in C^0([0,T]\times D;\R^N)$. Hence, $\alpha$ is uniformly continuous on $[0,T]$ and $y$ is uniformly continuous on $[0,T
]\times D$ because $[0,T]$ and $[0,T]\times D$ are compact and  $\alpha\in C^0([0,T];\R^N),\; y\in C^0([0,T]\times D;\R^N)$. Therefore, there exists $L_0\in\N$ such that if $L>L_0$ and $|s-t|<\frac{T}{L}$ then
\begin{align*}
    |\alpha(s)-\alpha(t)|\le \epsilon',\quad
    |y(s)-y(t)|\le\epsilon',
\end{align*}
where $\epsilon'=\frac{\epsilon}{M+M_\alpha}$ and we obtain
\begin{align*}
    |h(x(s;\xi),s)-h(x(t;\xi),t)|
    &\le M|\alpha(s)-\alpha(t)|+M_\alpha|y(s)-y(t)|
    \le M\epsilon'+M_\alpha\epsilon'=\epsilon.
\end{align*}
\end{proof}

\begin{proof}[\textbf{Proof of Theorem \ref{main_th_2}}]
By Theorem \ref{main_th}, there exist $\alpha,\gamma\in C^0([0,T];\R^N)$ and $\beta\in C^0([0,T];\R^{N\times N})$ such that
\begin{align*}
    \|S_f(T)-S_h(T)\|_{C^0(D;\R^N)}<\frac{\epsilon}{2}
\end{align*}
where $h(x,t)=\alpha(t)\odot\boldsymbol{\sigma}(\beta(t)x+\gamma(t))$. Let $L\in\N$ and let $t_l=\frac{l}{L}T\;(l=0,1,\ldots,L)$. For $l=0,1,\ldots,L$, we set $\alpha_l=\alpha(t_l)$, $\beta_l=\beta(t_l)$, $\gamma_l=\gamma(t_l)$, $h^{(l)}(x)=\alpha_l\odot\boldsymbol{\sigma}(\beta_l x+\gamma_l)$, and $z^{(l)}(\xi)\coloneqq x(t_l;\xi)-x^{(l)}(\xi)$, where $x(t;\xi)=S_h(t)\xi$ and $x^{(l)} = S^{res}_{\{\Delta t h^{(l)}\}_{l=0}^{L-1}}(l)$. It holds that for all $l=0,1,\ldots, L-1,$
\begin{align*}
    |z^{(l+1)}(\xi)|
    &=\left|\left(x(t_l;\xi)+\int^{t_{l+1}}_{t_l}h(x(s;\xi),s)ds\right)-\left(x^{(l)}(\xi)+\Delta t h^{(l)}(x^{(l)}(\xi))\right)\right|\\
    &=\left|z^{(l)}(\xi)+\int^{t_{l+1}}_{t_l}\left(h(x(s;\xi),s)-h(x^{(l)}(\xi),t_l)\right)ds\right|\\
    &\le |z^{(l)}(\xi)|+\int^{t_{l+1}}_{t_l}|h(x(s;\xi),s)-h(x(t_l;\xi),t_l)|ds+\int^{t_{l+1}}_{t_l}|h(x(t_l;\xi),t_l)-h(x^{(l)}(\xi),t_l)|ds.
\end{align*}
By \eqref{asmp_sigma}, we obtain
\begin{align*}
    |h(x(t_l;\xi),t_l)-h(x^{(l)}(\xi),t_l)|
    &=|\alpha(t_l)\odot\{ \boldsymbol{\sigma}(\beta(t_l)x(t_l;\xi)+\gamma(t_l))-\boldsymbol{\sigma}(\beta(t_l)x^{(l)}(\xi)+\gamma(t_l)) \}|\\
    &\le M_\alpha| \boldsymbol{\sigma}(\beta(t_l)x(t_l;\xi)+\gamma(t_l))-\boldsymbol{\sigma}(\beta(t_l)x^{(l)}(\xi)+\gamma(t_l)) |\\
    &\le M_\alpha \Lip (\sigma)|(\beta(t_l)x(t_l;\xi)+\gamma(t_l))
    -(\beta(t_l)x^{(l)}(\xi)+\gamma(t_l))|\\
    &\le  C|x(t_l;\xi)-x^{(l)}(\xi)|= C|z^{(l)}(\xi)|,
\end{align*}
where $C=M_\alpha M_\beta \Lip (\sigma),\; M_\alpha=\|\alpha\|_{C^0([0,T];\R^N)},$ and $M_\beta=\|\beta\|_{C^0([0,T];\R^{N\times N})}$. By Lemma \ref{lm_res}, there exists $L\in\N$ such that
\begin{align*}
|h(x(s;\xi),s)-h(x(t_l;\xi),t_l)|<\epsilon'\quad(t_l\le s\le t_{l+1},\; l=0,1,\ldots,L-1),
\end{align*}
where $\epsilon'=\frac{C\epsilon}{2e^{CT}}$, and hence
\begin{align*}
|z^{(l+1)}(\xi)|&\le|z^{(l)}(\xi)|+\Delta t\epsilon'+C\Delta t|z^{(l)}(\xi)|=(1+C\Delta t)|z^{(l)}(\xi)|+\Delta t\epsilon'.
\end{align*}
Furthermore, it holds that for all $l=1,2,\ldots,L$,
\begin{align*}
    |z^{(l)}(\xi)|&\le (1+C\Delta t)^l|z^{(0)}|+
    \sum^{l-1}_{m=0}(1+C\Delta t)^m\Delta t\epsilon'
    = (1+C\Delta t)^l|z^{(0)}|+\frac{((1+C\Delta t)^l-1)\Delta t\epsilon'}{1+C\Delta t-1}\\
    &\le (1+C\Delta t)^l|z^{(0)}|+\frac{(1+C\Delta t)^l\Delta t\epsilon'}{C\Delta t}
    =  \left(1+\frac{CT}{L}\right)^L\left\{|z^{(0)}|+\frac{\epsilon'}{C}\right\}.
\end{align*}
Since $z^{(0)}=\xi-\xi=0$ and $\left(1+\frac{x}{n}\right)^n\le e^x\quad(x>0, n\in \N),$
we obtain
\begin{align*}
    |z^{(l)}(\xi)|&\le e^{CT}\left\{|z^{(0)}|+\frac{\epsilon'}{C}\right\}\le  e^{CT}\frac{\epsilon'}{C}=\frac{\epsilon}{2}.
\end{align*}
Therefore, given $\epsilon>0$, there exists $L\in\N$ such that
\begin{align*}
    |S_h(T)\xi-x^{(L)}(\xi)|=|x(T;\xi)-x^{(L)}(\xi)|<\frac{\epsilon}{2}
\end{align*}
for all $\xi\in D$ and hence, the mapping $S^{res}_{\{\Delta t h^{(l)}\}_{l=0}^{L-1}}(L):D\ni\xi\mapsto x^{(L)}(\xi)\in\R^N$ satisfies that 
\begin{align*}
    &\quad\,\|S_f(T)-S^{res}_{\{\Delta t h^{(l)}\}_{l=0}^{L-1}}(L)\|_{C^0(D;\R^N)}
    =\|S_f(T)-x^{(L)}\|_{C^0(D;\R^N)}\\
    &\le\|S_f(T)-S_h(T)\|_{C^0(D;\R^N)}+\|S_h(T)-x^{(L)}\|_{C^0(D;\R^N)}
    <\epsilon.
\end{align*}
\end{proof}

\appendix
\section{Proof of Theorem \ref{thm:piccinini}}
\label{app1}

\begin{proof}[Proof of Theorem \ref{thm:piccinini}]
Let $\xi\in D$ be fixed and let $x_m(t)=S_{f_m}(t)\xi$, $\bar{x}(t)=S_{\bar{f}}(t)\xi$, $M=\|f\|_{L^\infty(0,T;C^0(D))}$\\ $Te^{\Lip (f)T}$, $E=\{ y\in\R^N\mid |x-y|\le M$ for some $x\in D\}$, and $C_0=\|f\|_{L^\infty(0,T;C^0(E))}$. For all $\epsilon>0$, if $0\le t_1,t_2\le T$ satisfy that $|t_1-t_2|<\delta_\epsilon\coloneqq\frac{\epsilon}{C_0}$, then we get,
\begin{align}\label{ineq0}
    |\bar{x}(t_2)-\bar{x}(t_1)|
    &\le \int^{t_2}_{t_1}|\bar{f}(\bar{x}(s),s)|ds
    =C_0|t_2-t_1|
    <C_0\frac{\epsilon}{C_0}
    =\epsilon.
\end{align}
Here, we take $m_\epsilon \in \N$ satisfying $\frac{T}{m_\epsilon}<\delta_\epsilon$ and $\frac{2TC_0}{m_\epsilon}<\epsilon$. Let $t>0$ be fixed and let $m>m_\epsilon,$ $N=\left\lfloor \frac{t}{T}m\right\rfloor+1,$ $ t_i=i\frac{T}{m}$ , $\bar{x}_i=\bar{x}(t_i)\:(i=0,1,\ldots,N-1),\; t_N=t$ and $\bar{x}_N=\bar{x}(t_N)$. Then, it holds that
\begin{align*}
    \bar{x}(t)-x_m(t)
    &=\int^t_{0}\left(\bar{f}(\bar{x}(\tau))-f(x_m(\tau),m\tau)\right)d\tau
    =\sum^N_{i=1}(I_{1,i}+I_{2,i}+I_{3,i})+I_4,
\end{align*}
where
\begin{align*}
    I_{1,i}&=\int^{t_i}_{t_{i-1}}\left(\bar{f}(\bar{x}(\tau))-\bar{f}(\bar{x}_i)\right)d\tau,\quad 
    I_{2,i}=\int^{t_i}_{t_{i-1}}\left(\bar{f}(\bar{x}_i)-f(\bar{x}_i,m\tau)\right)d\tau,\\
    I_{3,i}&=\int^{t_i}_{t_{i-1}}\left(f(\bar{x}_i,m\tau)-f(\bar{x}(\tau),m\tau)\right)d\tau,\quad
    I_4=\int^t_0\left(f(\bar{x}(\tau),m\tau)-f(x_m(\tau),m\tau)\right)d\tau.
\end{align*}
By \eqref{asmp_fLip}, \eqref{eq6}, and \eqref{ineq0}, we have that for $i=1,2,\ldots,N$ and $t_{i-1}\le\tau\le t_i$,
\begin{align*}
    |\bar{f}(\bar{x}(\tau))-\bar{f}(\bar{x}_i)|
    &=\left|\frac{1}{T}\int^T_0(f(\bar{x}(\tau),s)-f(\bar{x}_i, s))ds\right|
    \le\frac{1}{T}\int^T_0\Lip (f)|\bar{x}(\tau)-\bar{x}_i|ds
    =\Lip (f)\epsilon,
\end{align*}
which implies that for all $i=1,2,\ldots,N$, 
\begin{align}\begin{aligned}
    |I_{1,i}|
    &\le\int^{t_i}_{t_{i-1}}|\bar{f}(\bar{x}(\tau))-\bar{f}(\bar{x}_i)|d\tau 
    \le \int^{t_i}_{t_{i-1}}\Lip (f)|\bar{x}(\tau)-\bar{x}_i|d\tau
    \le \Lip (f)|t_i-t_{i-1}|\epsilon.\label{eq7}
\end{aligned}\end{align}
By \eqref{asmp_fLip}, it holds that for all $i=1,2,\ldots,N$,
\begin{align}\begin{aligned}
    |I_{3,i}|
    &\le \int^{t_i}_{t_{i-1}}|f(\bar{x}_i, m\tau)-f(\bar{x}(\tau),m\tau)|d\tau
    \le \int^{t_i}_{t_{i-1}}\Lip (f)|\bar{x}_i-\bar{x}(\tau)|d\tau\le \Lip (f)|t_i-t_{i-1}|\epsilon.\label{eq8}
\end{aligned}\end{align}
By periodicity of $f$, we get for all $i=1,2,\ldots,N-1$ and $x\in\R^N$, 
\begin{align*}
    \int^{t_i}_{t_{i-1}}f(x,m\tau)d\tau&=\int^{\frac{T}{m}}_0f(x,m\tau)d\tau=\frac{1}{m}\int^T_0f(x,\tau')d\tau' =\frac{T}{m}\bar{f}(x),
\end{align*}
where $\tau'=m\tau$, and hence, we have that for all $i=1,2,\ldots,N-1$,
\begin{align*}
    I_{2,i}=\int^{t_i}_{t_{i-1}}\left(\bar{f}(\bar{x}_i)-f(\bar{x}_i,m\tau)\right)d\tau=0.
\end{align*}
When $i=N$, we have
\begin{align}
    |I_{2,N}|
    \le\int^{t_N}_{t_{N-1}}|\bar{f}(\bar{x}_N)-f(\bar{x}_N,m\tau)|d\tau
    \le 2C_0(t_N-t_{N-1})\le \frac{2TC_0}{m},\label{eq9}
\end{align}
where we have used $t_N-t_{N-1}<\frac{T}{m}$. By \eqref{asmp_fLip}, it holds that
\begin{align}
    |I_4|\le\int^t_{0}|f(\bar{x}(\tau),m\tau)-f(x_m(\tau),m\tau)|d\tau\le \Lip (f)\int^t_{0}|\bar{x}(\tau)-x_m(\tau)|d\tau.\label{eq10}
\end{align}
Hence, by (\ref{eq7}), (\ref{eq8}), (\ref{eq9}), and (\ref{eq10}), we obtain for $m>m_\epsilon$,
\begin{align*}
    |\bar{x}(t)-x_m(t)|
    &\le (2\Lip (f)T+1)\epsilon+\Lip (f)\int^t_{0}|\bar{x}(\tau)-x_m(\tau)|d\tau.
\end{align*}
By Gronwall's inequality, if $m>m_\epsilon$, then
\begin{align*}
    |\bar{x}(t)-x_m(t)|\le (2\Lip (f)T+1)\epsilon e^{\Lip (f)T}.
\end{align*}
Since $\epsilon$ is arbitrary, we obtain $x_m\rightrightarrows \bar{x}$ as $m\rightarrow\infty$.
\end{proof}

\section*{Acknowledgments}

This work was partially supported by JSPS KAKENHI Grant Number JP20KK0058.

\end{document}